\DeclareMathOperator{\tr}{tr}
\DeclareMathOperator{\KL}{KL}
\title{Comparison of Generative Adversarial Networks Architectures Which Reduce Mode Collapse}
\author{Yicheng (Katherine) Hong}
\begin{document}
	\maketitle
	\begin{abstract}
		\normalsize
		Generative Adversarial Networks are known for their high quality outputs and versatility. However, they also suffer the mode collapse in their output data distribution. There have been many efforts to revamp GANs model and reduce mode collapse. This paper focuses on two of these models, PacGAN and VEEGAN. This paper explains the mathematical theory behind aforementioned models, and compare their degree of mode collapse with vanilla GAN using MNIST digits as input data. The result indicates that PacGAN performs slightly better than vanilla GAN in terms of mode collapse, and VEEGAN performs worse than both PacGAN and vanilla GAN. VEEGAN's poor performance may be attributed to average autoencoder loss in its objective function and small penalty for blurry features. 
	\end{abstract}

	\section{Introduction}
	
	A Generative Adversarial Network (GAN) is a neural network consisting of a generator and a discriminator.The network is trained with a set of data. The generator takes a random noise vector as input and generates data that resembles as much as the provided data possible. The discriminator reads a piece of data, either generated or from the actual dataset, and labels the data as real or fake.  For example, in this paper, I trained GANs with MNIST digits, a dataset of handwritten digits. In the network, the generator takes a vector whose elements are generated randomly from a normal distribution, and outputs an image. The input of the discriminator is an image either from the generator output or from the MNIST dataset. The discriminator identifies the input image as real (labeled as 1) or fake (labeled as 0).The degree of resemblance of generated data to data from the actual dataset is measured by cross entropy between a vector of ones (the ideal situation where all generated data is labeled real) and the labels of generated images. In the early stage of training, the images generated by the generator are mostly labeled as fake by the discriminator. After many epochs, the generator begins to output images that resemble handwritten digits, and these images are more likely to be recognized by the discriminator as real images from the MNIST digit dataset. \\
	
	GAN has the advantage of generating better samples than other generative models. Also, a GAN model can train any kind of generator, unlike other models which require the generator to have a particular functional form. However, GAN also has several limitations. First, it is difficult and slow to train. Using my own experiments as an example, while training DCGAN (Deep Convolutional Generative Adversarial Networks) with MNIST digits, it took 50 epochs for the generator to produce images that resemble MNIST digits. After 100 epochs, some of the images produced by the generator still possess features that resembles multiple digits or none of the digits. Another limitation, which is be the primary topic addressed in this paper, is mode collapse, where the occurrence of a certain type of data in the generated distribution is much fewer than its occurrence in the distribution of the provided dataset. I will use my DCGAN experiments with MNIST digits again as an example. The MNIST training dataset has 60,000 images of handwritten digits, with 6000 images of each digit. The training dataset provided to DCGAN is uniform. In every trial of my experiment, DCGAN generates 100 images. Among the 1000 images in the ten trials I conducted, digit 2 is only generated 22 times. However, if DCGAN generated data in proportion of MNIST training dataset, there would be approximately 100 images of digit 2 among the 1000 images. Such discrepancy between the distributions of generated and actual data is an example of mode collapse.\\
	
	Several improvements to the original GAN architecture \cite[Section~6]{pacgan} have been proposed to mitigate mode collapse. Among them, there are VEEGAN (Variational Encoder Enhancement to Generative Adversarial Networks), unrolled GAN, dropout GAN, and PacGAN. In this paper, I will discuss the theory of PacGAN \cite{pacgan} and VEEGAN \cite{veegan}, two improved architectures aimed to reduce mode collapse. I will also present experimental results of mode collapse reduction using the MNIST digits with the two mentioned architecture and compare the results with MNIST digits generated by DCGAN. The MNIST digits generated by DCGAN is the control group in mode collapse reduction experiment.\\
 	
 	The intuitive reason why PacGAN is effective in reducing mode collapse is because the probability distribution of m data samples packed together is an m\textsuperscript{th} product distribution, and the difference in the mode collapse region is more pronounced in the m\textsuperscript{th} degree than the first degree. The intuition behind VEEGAN is that it adds a reconstructor into the generator/discriminator network. The reconstructor takes actual data or data created by the generator as input and outputs noise under a Gaussian distribution. The reconstructor has two objectives: to act as the inverse of the generator and to map all data to Gaussian noise. If both objectives are achieved, the reconstructor in turn helps the generator map Gaussian noise to synthetic data matching the probability distribution of actual data.
	
	\section{PacGAN} 
	
	PacGAN, a modification of GAN proposed by Zinan Lin et al., inputs m independently and randomly selected data samples into the discriminator at the same time \cite{pacgan}. The data samples are either all selected from the actual dataset or all from the generator output. If a single data sample is selected from a dataset with probability distribution $T$, then m independently and randomly selected samples will have a probability distribution of $T^{m}$. The authors of PacGAN framework claimed that mode collapse is more pronounced in $T^{m}$ than $T$, and will be easier to avoid in $T^{m}$ \cite{pacgan}. Before I present the proof of their claim, I need to first introduce both their algebraic and geometric definitions of mode collapse and how to measure mode collapse \cite{pacgan}. In the rest of Section 2, $P$ is the probability distribution of the data provided to GAN for training, also called the target distribution. $Q$ is the probability distribution of the data produced by the generator, also called the generated distribution. The mode collapse between target distribution $P$ and generated distribution $Q$ is defined in Definition \ref{modecollapsedef}.\\
	
	\newtheorem{dfn}{Definition}
	\begin{dfn}[This is Definition 1 in \cite{pacgan}]
	\label{modecollapsedef}
	Between the target distribution $P$ and the generated distribution $Q$  with a common domain D, we say $P$ and $Q$ have $(\varepsilon,\delta)  $ mode collapse if for $ 0\leq\varepsilon<\delta\leq 1 $, there exist a set $ S \subseteq D $ such that $ P(S)\geq\delta $ and $ Q(S)\leq\varepsilon $.
	\end{dfn}
	
	We also need to define how mode collapse is measured. One measurement is the area of mode collapse region. In a 2D plane, let $ \varepsilon $ be the x-axis and $ \delta $ eb the y-axis. For a pair of target and generated distributions $(P,Q)$, mode collapse region $R(P,Q)$ is defined as the convex hull of the region of points $ (\varepsilon, \delta) $ such that $(P,Q)$ exhibit $ (\varepsilon, \delta) $ mode collapse. Definition \ref{R(P,Q)def} provides the definition of mode collapse region in mathematical terms as well. An example of mode collapse region  $ R(P,Q) $ is shown in Figure \ref{dTV}, where the blue shaded area denotes $ R(P,Q).$ \\
	
	\begin{dfn}[This is defined in p.17 of \cite{pacgan}]
		\label{R(P,Q)def}
		For a pair of target and generated distributions $(P,Q)$, mode collapse region $R(P,Q)$ is defined as the convex hull of the region of points $ (\varepsilon, \delta) $ such that $(P,Q)$ exhibit $ (\varepsilon, \delta) $ mode collapse, i.e. $ R(P,Q) = \mathrm{conv}({(\varepsilon, \delta) \: | \: \delta > \varepsilon \text{ and } (P,Q) \text{ has } (\varepsilon, \delta) \text{ mode collapse}}) $ where $\mathrm{conv}$(.) denotes the convex hull.
	\end{dfn}

	Mode collapse region is a clear and direct way to indicate the severity of mode collapse. However, if $ P $ and $ Q $ are multivariate probability distributions, mode collapse region can evolve into a volume of high dimensions, and the volume may be intractable. Therefore, we would like to measure mode collapse using a numerical value instead of the area or volume of a geometric object. The authors of PacGAN framework used total variation distance between $ P $ and $ Q $ to measure the mode collapse between $ P $ and $ Q $ \cite{pacgan}.\\ The geometric interpretation of total variation distance between $ P $ and $ Q $ is shown in Figure \ref{dTV}, denoted as $ d_{TV}(P,Q) $ in red. 
		
	\begin{dfn}[This is defined in p.15 of \cite{pacgan}]
	The total variation distance between $P$ and $Q$ is $ d_{TV}(P,Q)= \displaystyle\sup_{S \subseteq D} \, \{P(S)-Q(S)\} $.\\
	\end{dfn}

	\begin{figure}[H]
		\centering
		\includegraphics[scale = 0.5]{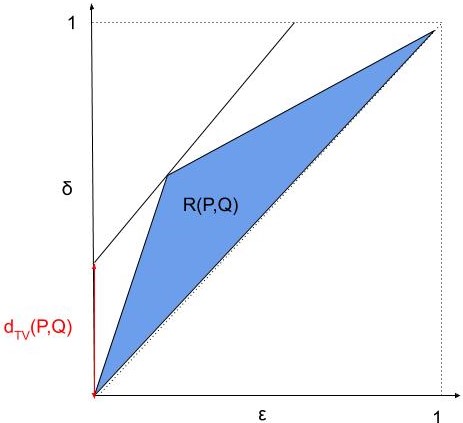}
		\caption{The mode collapse region $ R(P,Q) $ is shaded in blue. Total variation distance between $ P $ and $ Q $, $ d_{TV}(P,Q) $, is denoted as the red segment on the vertical axis.}
		\label{dTV}
	\end{figure}

	As shown in Figure \ref{dTV}, geometrically, this translates to the vertical distance between the upper boundary of the mode collapse region (denoted as $ R(P,Q) $) and $ \varepsilon=\delta $. In other words, the total variation distance between P and Q is the intersection between the vertical axis and the tangent line to the upper boundary of $ R(P,Q) $ that has a slope of 1. Figure \ref{dTV_explained} further explains why this geometric interpretation is equivalent to $ \displaystyle\sup_{S \subseteq D} \, \{P(S)-Q(S)\} $. \\
	
	\begin{figure}[H]
		\centering
		\includegraphics[scale=0.5]{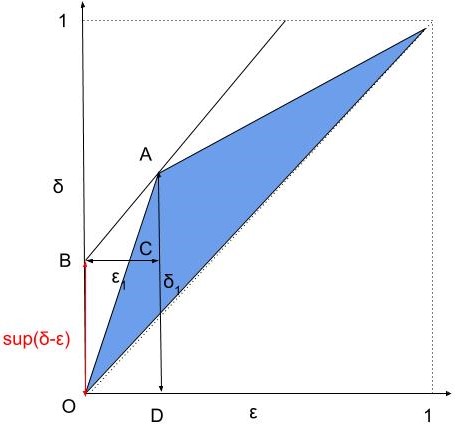}
		\caption{The intersection between the vertical axis and the tangent line to the upper boundary of $ R(P,Q) $ that has a slope of 1 is the maximum difference between $ \varepsilon $ and $ \delta $.}
		\label{dTV_explained}
	\end{figure}

	In Figure \ref{dTV_explained}, line $ AB $ is parallel to line $ \varepsilon = \delta $. $ BC $ is parallel to the x-axis. AC is perpendicular to the x-axis. Because line $ AB $ has a slope of 1 and BC is parallel to the x-axis, $ \angle ABC = 45 \degree $. $ \triangle ABC $ is a right isosceles triangle. Point $A$ is where $ \varepsilon -\delta $ is maximum because $A$ is farthest from the $ \varepsilon = \delta $ line in $ R(P,Q) $. Suppose the coordinates of $ A $ is $ (\varepsilon_{1}, \delta_{1}) $, then $ \varepsilon_{1}-\delta_{1} = \displaystyle\sup_{S \subseteq D} \, \{P(S)-Q(S)\} $. Since $ OB = AD - BC = \varepsilon_{1}-\delta_{1} $, and $ OB = d_{TV}(P,Q) $, we can say that $ d_{TV}(P,Q) $ is the intersection between the vertical axis and the tangent line to the upper boundary of $ R(P,Q) $ that has a slope of 1.\\
	
	The authors of PacGAN framework pointed out that probability distributions with the same total variation distance can manifest drastically different mode collapse behavior. In \cite{pacgan}, they used a simple example to demonstrate this discrepancy. Figure \ref{toyexample} provides an illustration for this example. Let $ U[a, b] $ denote a uniform distribution between a and b with probability density function 
	\[f(x) = \begin{cases}
	\frac{1}{b-a} \text{ for } x \in [a,b] \\
	0 \text{ for } x \notin [a,b] 
	\end{cases}.\]  
	Suppose the target probability distribution is $ P = U[0,1],$ and there are two different generated probability distributions $ Q_{1} $ and $ Q_{2} ,$ with $ Q_{1} = U[0.2, 1] $ and $ Q_{2} = 0.3U[0,0.5] + 0.7U[0.5,1].$ In comparison to target distribution $ P $, $ Q_{1} $ displays more severe mode collapse than $ Q_{2}$, as $ Q_{2}$ addresses the entire domain of $ [0,1] $, while $ Q_{1} $ does not address $ [0,0.2] $ at all. However, their total variation distance to $ P $ are equal: $ d_{TV}(P, Q_{1}) = d_{TV}(P, Q_{2}) = 0.2.$
	
	\begin{figure}[H]
		\centering
		\includegraphics[scale=1]{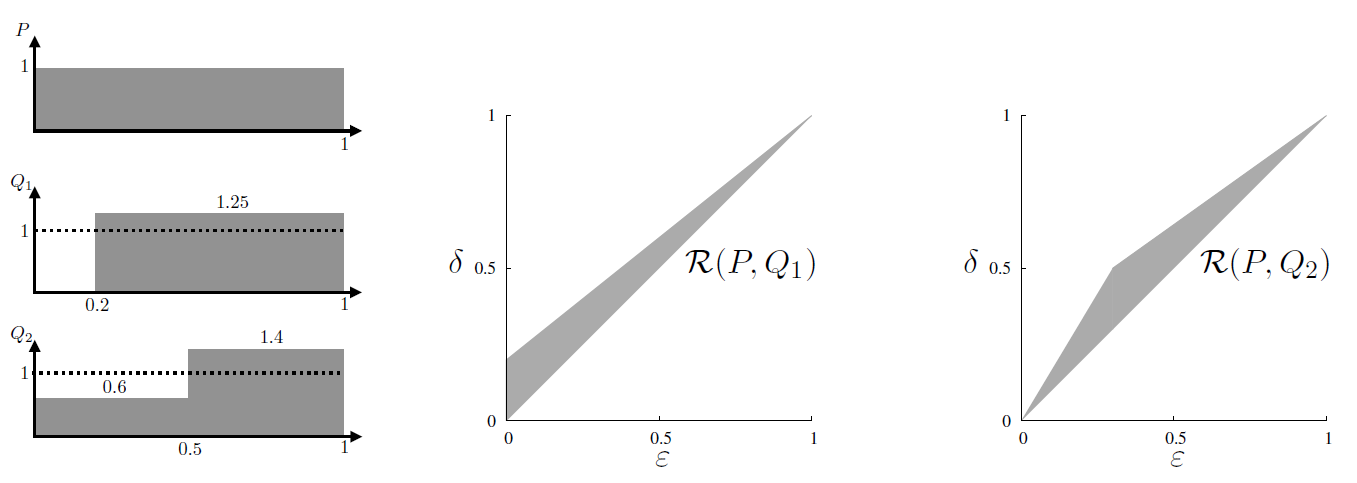}
		\caption{This figure is excerpted from \cite{pacgan}. The left panel shows the probability density functions of $ P, Q_{1}, \text{ and } Q_{2}.$ The middle panel shows the mode collapse region $ R(P,Q_{1}).$ The right panel shows the mode collapse region $ R(P,Q_{2}).$}
		\label{toyexample}
	\end{figure}

	However, if we draw m samples $ (x_{1}, ..., x_{m}) $ from distribution $ P ,$ and consider $ X = (x_{1}, ..., x_{m}) $ as a single variable, then $ X $ has a probability distribution of $ P^{m} .$ Similarly, drawing m samples at the same time from probability distribution $ Q_{1} $ and $ Q_{2} $ obtain a probability distribution $ Q_{1}^{m} $ and $ Q_{2}^{m} $ of the m samples. We call the number of samples m drawn from a probability distribution the \textbf{packing degree}. Thus, the mode collapse regions between the target distribution and generated distributions with packing degree m are $ R(P^{m},Q_{1}^{m}) $ and $ R(P^{m}, Q_{2}^{m}) $, and the total variation distance between the target distribution and generated distributions with packing degree m are $ d_{TV}(P^{m},Q_{1}^{m}) $ and $ d_{TV}(P^{m}, Q_{2}^{m}) $. Figure \ref{toyexamplem=5} shows the evolution of mode collapse regions $ R(P^{m},Q_{1}^{m}) $ and $ R(P^{m}, Q_{2}^{m}) $ and total variation distances $ d_{TV}(P^{m},Q_{1}^{m}) $ and $ d_{TV}(P^{m}, Q_{2}^{m}) $ with packing degree m = 1 to 5.
	
	\begin{figure}[H]
		\centering
		\includegraphics[scale=1]{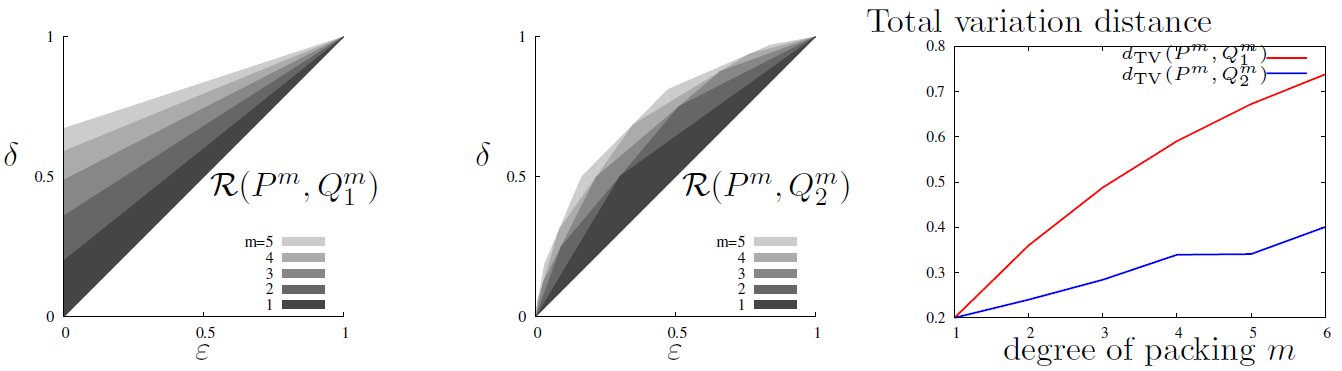}
		\caption{This figure is excerpted from \cite{pacgan}. From packing degree m = 1 to 5, the left panel shows the mode collapse regions of $ R(P^{m}, Q_{1}^{m}) $, the middle panel shows the mode collapse regions $ R(P^{m}, Q_{2}^{m}) $, and the right panel shows $ d_{TV}(P^{m},Q_{1}^{m}) $ and $ d_{TV}(P^{m}, Q_{2}^{m}) $ from m = 1 to 6.}
		\label{toyexamplem=5}
	\end{figure} 

	From Figure \ref{toyexamplem=5}, we can see that although $ (P,Q_{1}) $ and $ (P,Q_{2}) $ have equal area of mode collapse regions and total variation distances, $ R(P^{m},Q_{1}^{m}) $ and $ d_{TV}(P^{m},Q_{1}^{m}) $ hiked as packing degree m increased, while $ R(P^{m},Q_{2}^{m}) $ and $ d_{TV}(P^{m},Q_{2}^{m}) $ increase gradually with the packing degree. For computational efficiency, we choose to measure the severity of mode collapse using total variation distance.\\
	
	However, without packing, generated distributions can have the same total variation distance to the target distribution with very different mode collapse regions. Consider any pair of target distribution $ P $ and generated distribution $ Q $ $ (P,Q) $ such that $ d_{TV}(P,Q) = \tau $, where $ 0\leq \tau \leq 1 .$ Under the same total variation distance, the mode collapse region between $ P $ and $ Q $ can be drastically different.In Figure \ref{3bounds}, there are three mode collapse regions: $ R(P_\text{in},Q_\text{in}) $ in red, $ R(P, Q) $ in blue, and $ R(P_\text{out}, Q_\text{out}) $ in green. $P_\text{in} $ is the target distribution and $ Q_\text{in} $ is the generated distribution such that $ R(P_\text{in}, Q_\text{in}) = \displaystyle\min_{P, Q} R(P,Q) $ subjected to $ d_{TV}(P,Q) = \tau $. $ P_\text{out} $ is the target distribution and $ Q_\text{out} $ is the generated distribution such that $ R(P_\text{out}, Q_\text{out}) = \displaystyle\max_{P, Q} R(P,Q) $ subjected to $ d_{TV}(P,Q) = \tau $.
	The red line on the upper boundary shared by these three regions has a slope of 1 and is parallel to the line $ \varepsilon = \delta $. 	
	
	\begin{figure}[H]
		\centering
		\includegraphics[scale=0.5]{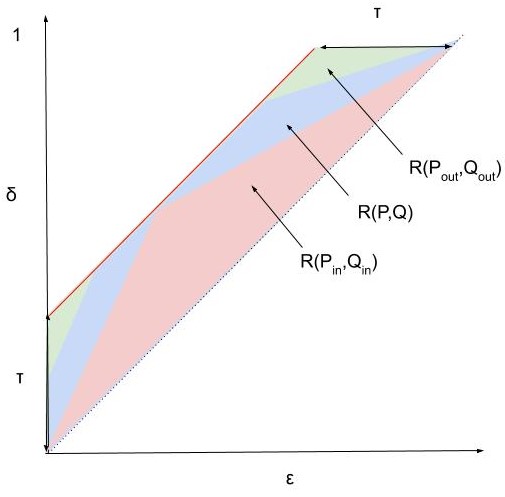}
		\caption{Under the same total variation distance of $ \tau $ between target and generated distribution, the minimum mode collapse region $ R(P_\text{in}, Q_\text{in}) $ is shaded in red, the maximum mode collapse region $ R(P_\text{out}, Q_\text{out}) $ is shaded in green, and $ R(P,Q) $, shaded in blue, is sandwiched between the minimal and maximal mode collapse regions.}
		\label{3bounds}
	\end{figure}
	
	From Figure \ref{3bounds}, we can see that $ d_{TV}(P_\text{in},Q_\text{in}) = d_{TV}(P,Q) = d_{TV}(P_\text{out},Q_\text{out})$, even though $ R(P_\text{in},Q_\text{in}) \subseteq R(P,Q) \subseteq R(P_\text{out}, Q_\text{out}) $. To see the effect of packing m samples on total variation distances, we want to find $ d_{TV}(P_\text{in}^{m},Q_\text{in}^{m}) $, the minimum of $ d_{TV}(P^{m},Q^{m}) $, and $ d_{TV}(P_\text{out}^{m},Q_\text{out}^{m}) $, the maximum of $ d_{TV}(P^{m},Q^{m}) $. Theorem \ref{dTVbounds} states that there is a substantial difference between $ \displaystyle\min_{P,Q} d_{TV}(P^{m},Q^{m}) $ and $ \displaystyle\max_{P,Q} d_{TV}(P^{m},Q^{m}) $. Such difference indicates that $ d_{TV}(P^{m},Q^{m}) $ is a more effective measurement of mode collapse than $ d_{TV}(P,Q) $.\\
	
	\newtheorem{thm}{Theorem}
	\begin{thm}[This is Theorem 3 in \cite{pacgan}]
		\label{dTVbounds}
		
		For $ 0 \leq \tau \leq 1 $ and positive integer m, and for any pair of target and generated probability distributions $ (P,Q) $ such that $ d_{TV}(P,Q)=\tau $, 
		
		\[\displaystyle\min_{P,Q} d_{TV}(P^{m},Q^{m}) = \displaystyle\min_{0\leq\alpha\leq 1-\tau} d_{TV}(P(\alpha)^{m},Q(\alpha, \tau)^{m}) \] 
		
		where $ P(\alpha) $ and $ Q(\alpha, \tau) $ are the m\textsuperscript{th} order product distributions of random variables distributed as 
		\[P(\alpha) = [1-\alpha, \alpha]\]
		\[Q(\alpha, \tau) = [1-\alpha-\tau, \alpha + \tau];\]
	
		\[\text{and } \displaystyle\max_{P,Q} d_{TV}(P^{m},Q^{m}) = 1-(1-\tau)^{m}.\]
		
	\end{thm}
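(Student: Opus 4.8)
The plan is to prove each of the two equalities by matching an upper and a lower bound, using the Bernoulli pairs $(P(\alpha),Q(\alpha,\tau))$ as explicit competitors for one direction and a one-bit quantization (data-processing) argument for the other. First observe that every Bernoulli pair is admissible: $d_{TV}(P(\alpha),Q(\alpha,\tau)) = |(1-\alpha)-(1-\alpha-\tau)| = \tau$ for each $\alpha\in[0,1-\tau]$. Hence $\min_{P,Q} d_{TV}(P^m,Q^m) \le d_{TV}(P(\alpha)^m,Q(\alpha,\tau)^m)$ for every such $\alpha$, so $\min_{P,Q} d_{TV}(P^m,Q^m) \le \min_{0\le\alpha\le 1-\tau} d_{TV}(P(\alpha)^m,Q(\alpha,\tau)^m)$; and likewise $\max_{P,Q} d_{TV}(P^m,Q^m) \ge d_{TV}(P(0)^m,Q(0,\tau)^m)$. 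It remains to prove the reverse inequalities.

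For the reverse inequality in the minimization I would invoke the data-processing inequality for total variation, $d_{TV}(f_\#\mu,f_\#\nu)\le d_{TV}(\mu,\nu)$, which is immediate from $f_\#\mu(S)-f_\#\nu(S) = \mu(f^{-1}S)-\nu(f^{-1}S)$. Fix an arbitrary admissible pair $(P,Q)$ on a common domain $D$, write $p,q$ for densities with respect to $P+Q$, and set $B := \{x\in D: q(x)>p(x)\}$. By a standard identity, $Q(B)-P(B) = \int_B(q-p) = \int_D(q-p)^+ = d_{TV}(P,Q) = \tau$, so with $\alpha := P(B)$ we have $Q(B)=\alpha+\tau$ and $0\le\alpha\le 1-\tau$. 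Let $\phi:D\to\{0,1\}$ be the indicator of $B$ and let $\Phi(x_1,\dots,x_m) := (\phi(x_1),\dots,\phi(x_m))$. Since $P^m,Q^m$ are product measures and $\Phi$ acts coordinatewise, the pushforwards are exactly $\Phi_\#(P^m) = P(\alpha)^m$ and $\Phi_\#(Q^m) = Q(\alpha,\tau)^m$. Data processing then gives $d_{TV}(P^m,Q^m) \ge d_{TV}(P(\alpha)^m,Q(\alpha,\tau)^m) \ge \min_{0\le\alpha'\le 1-\tau} d_{TV}(P(\alpha')^m,Q(\alpha',\tau)^m)$, and taking the minimum over $(P,Q)$ finishes this half.

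For the maximization, the reverse (upper) bound $d_{TV}(P^m,Q^m)\le 1-(1-\tau)^m$ for every admissible $(P,Q)$ I would derive from the overlap identity $1-d_{TV}(\mu,\nu)=\int\min(p_\mu,p_\nu)$ (densities with respect to a common dominating measure), applied with the product dominating measure, together with the elementary bound $\min(\prod_i a_i,\prod_i b_i)\ge\prod_i\min(a_i,b_i)$ for nonnegative reals: this yields $1-d_{TV}(P^m,Q^m) \ge \bigl(\int_D\min(p,q)\bigr)^m = (1-\tau)^m$. (Equivalently, tensorize an optimal coupling of $(P,Q)$.) Finally, $P(0)=\delta_0$ and $Q(0,\tau)=\mathrm{Bernoulli}(\tau)$, so $P(0)^m$ is the point mass at $(0,\dots,0)$ while $Q(0,\tau)^m$ assigns that point probability $(1-\tau)^m$, whence $d_{TV}(P(0)^m,Q(0,\tau)^m) = 1-(1-\tau)^m$; combined with the lower bound noted above this gives $\max_{P,Q} d_{TV}(P^m,Q^m)=1-(1-\tau)^m$.

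I expect the main obstacle to be the reverse direction of the minimization: the real content is recognizing that the correct reduction is to collapse $D$ to the single bit ``is the generated density larger here?'', and then verifying that this quantizer pushes $P^m$ and $Q^m$ \emph{exactly} onto the product Bernoulli distributions named in the statement, not merely onto some pair with $d_{TV}=\tau$. Two minor technical points to dispatch are the identity $\int_B(q-p)=d_{TV}(P,Q)$ for $B=\{q>p\}$ (Jordan decomposition of $P-Q$), and the remark that the outer $\min$ over $\alpha$ is attained because $\alpha\mapsto d_{TV}(P(\alpha)^m,Q(\alpha,\tau)^m)$ is continuous on the compact interval $[0,1-\tau]$. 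I would not attempt to evaluate $\min_{0\le\alpha\le1-\tau} d_{TV}(P(\alpha)^m,Q(\alpha,\tau)^m)$ in closed form, since the theorem only asserts the reduction to this one-parameter Bernoulli problem.
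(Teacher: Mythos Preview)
Your proof is correct, and it is genuinely different from the route the paper takes. The paper argues through the mode-collapse / hypothesis-testing region framework: it identifies extremal pairs $(P_{\mathrm{in}},Q_{\mathrm{in}})$ and $(P_{\mathrm{out}},Q_{\mathrm{out}})$ whose regions sandwich every $R(P,Q)$ with $d_{TV}(P,Q)=\tau$, then invokes Blackwell's reverse data-processing theorem to convert the region inclusion $R(P',Q')\subseteq R(P,Q)$ into the \emph{existence} of a stochastic map sending $(P,Q)$ to $(P',Q')$; tensorizing that map and applying ordinary data processing yields the inequalities on $d_{TV}$ of the $m$-fold products. Your argument bypasses Blackwell entirely: for the lower bound you \emph{exhibit} the channel directly (the one-bit quantizer $\phi=\mathbf{1}_{\{q>p\}}$) rather than inferring its existence from a region inclusion, and for the upper bound you use the elementary overlap inequality $1-d_{TV}(P^m,Q^m)\ge(1-\tau)^m$ instead of routing through the outer region. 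What the paper's approach buys is a unified conceptual picture---once the Blackwell machinery is in place, both bounds fall out of the same region-monotonicity principle, and the same framework would handle other $f$-divergences or other functionals of the region. What your approach buys is economy: it is self-contained, avoids the substantial detour through comparison of experiments, and makes explicit the single sufficient statistic (the likelihood-ratio sign) that the abstract theorem only promises exists.
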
 
	
	To prove Theorem \ref{dTVbounds}, the authors of PacGAN established a few propositions and theorems that will later be applied in the proof. They applied binary hypothesis testing in the context of GAN discriminator. Through binary hypothesis testing, they are able to apply data processing inequality onto mode collapse regions, thus proving an inequality between mode collapse regions of probability distributions implies an inequality between total variation distances of these distributions \cite{pacgan}.\\
	
	Binary hypothesis testing consists of experiments based upon two hypotheses. The null hypothesis is defined as $ h = 0$. The alternate hypothesis, i.e. rejection of null hypothesis, is defined as $ h=1 $. In the experiment, an observation is made. Let $ X $ be a random variable that denotes the outcome of observation. The outcome of observation depends upon a rejection set $ S_{r} $ drawn from the outcome range $ \mathcal{X} $, such that if $ X\in S_{r} $, then the null hypothesis is rejected. We define \textbf{false positive rate} (FPR) as the probability of null hypothesis is true but rejected during observation, i.e. $ \text{FPR} = \mathbb{P}(X\in S_{r}|h=0) $. We define true positive rate as the probability of null hypothesis is false and rejected during observation, i.e. $ \text{TPR} = \mathbb{P}(X\in S_{r}|h=1) $. We establish a 2D plane with FPR as x-axis and TPR as y-axis, we define the \textbf{hypothesis testing region} in this plane.\\
	
	\begin{dfn}[This is an expansion of the definition of hypothesis testing region in p.27 in \cite{pacgan}]
		\label{hregion}
		The hypothesis testing region $ R(h, X) $ is the convex hull of the region of all points \\
		${(\mathbb{P}(X\in S_{r}|h=0), \mathbb{P}(X\in S_{r}|h=1))}$ for all possible rejection sets $ S_{r} $,\\ i.e. 
		${R(h, X) = \mathrm{conv}((\mathbb{P}(X\in S_{r}|h=0), \mathbb{P}(X\in S_{r}|h=1))}$.
	\end{dfn}
	 
	We can apply binary hypothesis testing to the context of a GAN discriminator. We consider the null hypothesis $h = 0$ as the data sample is from the generated distribution $ Q $, and consider the alternate hypothesis $h = 1$ as the data sample is from the target distribution $ P $ . The discriminator receives input sample $ X $, and decides whether $ X $ is a real data sample from distribution $ P $ or a synthetic sample from distribution $ Q $. We make the decision based on the rejection set $ S_{r} $, such that if $ X\in S_{r} $, then the discriminator labels the data sample as real data. FPR is the probability the data is created by the generator but the discriminator mistakenly labels it as real data. TPR is the probability that the data is created by the generator and the discriminator correctly labels it as synthetic. We assume that FPR $ < $ TPR, because in a large set samples, an untrained discriminator which randomly assigns labels to data samples would on average have equal number of mistakenly labeled samples and correctly labeled samples. In reality, the discriminator in GAN is trained and has superior performance than random assignments.\\
	
	Recall from Definition \ref{modecollapsedef} and the definition of mode collapse region that for $ 0\leq\varepsilon<\delta\leq 1 $, $ (P,Q) $ has $ (\varepsilon, \delta) $ mode collapse if there exist a set $ S $ such that $ P(S)\geq\delta $ and $ Q(S)\leq\varepsilon $. On a 2D plane where $ \varepsilon $ is the x-axis and $ \delta $ is the y-axis, $ R(P,Q) $ is the convex hull of the region of all points $(\varepsilon, \delta)$ such that $ (P,Q) $ has $(\varepsilon, \delta)$ mode collapse. The hypothesis testing region is the convex hull of the region of all points $ (\mathbb{P}(X\in S_{r}|h=0), \mathbb{P}(X\in S_{r}|h=1)) $ for all possible rejection sets $ S_{r} $. Since we have established the assumption $ 0\leq \text{FPR} \leq \text{TPR} \leq 1 $, we can find a set $ S = S_{r} $ so that $ \mathbb{P}(X\in S_{r}|h=0) = \varepsilon $ and $ \mathbb{P}(X\in S_{r}|h=1) = \delta $. By finding such set $ S $, we can establish a bijection between mode collapse region and hypothesis testing region. Using Definition \ref{modecollapsedef}, \ref{R(P,Q)def}, and \ref{hregion}, the authors of PacGAN established a bijection between mode collapse region and hypothesis testing region \cite{pacgan}.\\
	
	By the one-to-one correspondence, $ P(S)=\mathbb{P}(X\in S_{r}|h=1) $ and $ Q(S)=\mathbb{P}(X\in S_{r}|h=0) $ for $ S_{r} \subset [0,1] $. Similarly, for a different pair of distribution $ (P',Q') $, we denote the random variable that decides to reject the null hypothesis $ h' $ or not as $ X' $. We can find a $ S' $ from the common domain of $ P' $ and $ Q' $ such that  $ P'(S')= \mathbb{P}(X'\in S'_{r}|h'=1)$ and $ Q'(S')= \mathbb{P}(X'\in S'_{r}|h'=0)$ for $S'_{r} \subset [0,1] $.\\
	
	We have established the bijection between hypothesis testing region and mode collapse region. Now we draw connection between mutual information and hypothesis testing region, so that later we can introduce a powerful theorem, data processing inequality. Higher mutual information between hypothesis $ h $ and random variable $ X $ leads to higher TPR and lower FPR in the hypothesis testing region. High TPR and low FPR correspond to a larger area of mode collapse with a large total variation distance. Data processing inequality states that the more a random variable is processed, there will be less mutual information between the hypothesis and that variable, which leads to a smaller hypothesis testing region and a smaller mode collapse region. \\
	
	Before we introduce data processing inequality, we need to first define a Markov chain, as we will need Markov chain as a condition in the theorem later. \\
	
	\begin{dfn}[\cite{cover}]
		Random variable $X, Y, Z$ form a Markov chain $X-Y-Z$ if the conditional distribution of $Z$ depends only on $Y$ and is conditionally independent of $X$, i.e. $X, Y, Z$ form a Markov chain $X-Y-Z$ if the joint probability density function can be written as $ \mathbb{P}(X,Y,Z) = \mathbb{P}(X)\mathbb{P}(Y|X)\mathbb{P}(Z|Y) $. 
	\end{dfn}
	
	Now we apply the concept of Markov chain to random variables $ h $, $ X $ and $ X' $ in binary hypothesis testing. Since $ P(S) = \mathbb{P}(X\in S_{r}|h=1) $ and  $ Q(S) = \mathbb{P}(X\in S_{r}|h=0)$ from the bijection between hypothesis testing region and mode collapse region, $ X $ is conditionally dependent on $h$. $ X' $ is conditionally dependent on hypothesis $ h' $, and $ X' $ is conditionally independent of $ h $. If we can find a function $ f $ such that $ f(P,Q) = (P',Q') $, then $ f(\mathbb{P}(X\in S_{r}|h=1), \mathbb{P}(X\in S_{r}|h=0)) = (\mathbb{P}(X'\in S'_{r}|h'=1), \mathbb{P}(X'\in S'_{r}|h'=0)) $. This implies that function f establishes a conditional dependence between the probability distribution of $ X$ and the probability distribution of $ X' $. Thus, we can argue that $ h, X, X' $ form a Markov chain $ h-X-X' $. \\
	
	After we defined the Markov chain, we are ready to present the data processing inequality. Later we will apply data processing inequality in the context of target distribution $ P $ and generated distribution $ Q $ in a GAN. 
	
	\begin{thm}[Data Processing Inequality \cite{cover}]
		Let $ X-Y-Z $ be a Markov chain, then $ I(X;Z)\leq I(X;Y) $ where $ I(X;Y) $ denotes the mutual information between $ X $ and $ Y $ .
	\end{thm}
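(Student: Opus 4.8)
The plan is to obtain the inequality from the chain rule for mutual information applied to the pair $(Y,Z)$, expanded in the two possible orders. First I would write $I(X;Y,Z)$ in the two ways the chain rule allows: $I(X;Y,Z)=I(X;Y)+I(X;Z\mid Y)$ and $I(X;Y,Z)=I(X;Z)+I(X;Y\mid Z)$, where $I(\cdot\,;\cdot\mid\cdot)$ denotes conditional mutual information. Equating the right-hand sides gives the identity $I(X;Y)+I(X;Z\mid Y)=I(X;Z)+I(X;Y\mid Z)$, and the rest of the argument is just a matter of controlling the two conditional terms.

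Next I would invoke the defining property of the Markov chain $X-Y-Z$, namely $\mathbb{P}(X,Y,Z)=\mathbb{P}(X)\mathbb{P}(Y\mid X)\mathbb{P}(Z\mid Y)$, which says exactly that $X$ and $Z$ are conditionally independent given $Y$. Conditional independence is equivalent to the vanishing of the conditional mutual information, so $I(X;Z\mid Y)=0$, and the identity collapses to $I(X;Y)=I(X;Z)+I(X;Y\mid Z)$. Finally, conditional mutual information is an average over the conditioning variable of ordinary mutual informations, each of which is a Kullback--Leibler divergence and hence nonnegative (Gibbs' inequality), so $I(X;Y\mid Z)\ge 0$; dropping this term yields $I(X;Z)\le I(X;Y)$, which is the claim.

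The only inputs are the chain rule for mutual information and the nonnegativity of (conditional) mutual information, both elementary, so if a self-contained treatment is wanted I would first record them as lemmas: the chain rule comes from rewriting $I(A;B)=H(A)-H(A\mid B)$ and using the entropy chain rule $H(Y,Z)=H(Y)+H(Z\mid Y)$, and nonnegativity reduces to the concavity of the logarithm via Jensen's inequality. An alternative, channel-oriented route avoids the chain rule altogether: one writes $I(X;Z)=\mathbb{E}_{X}[\KL(\mathbb{P}(Z\mid X)\,\|\,\mathbb{P}(Z))]$, notes that the Markov property realizes both $\mathbb{P}(Z\mid X=x)$ and $\mathbb{P}(Z)$ as the push-forwards of $\mathbb{P}(Y\mid X=x)$ and $\mathbb{P}(Y)$ through the one fixed channel $\mathbb{P}(Z\mid Y)$, and then applies the contraction of KL divergence under a channel, which itself follows from convexity (the log-sum inequality). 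I do not expect a genuine obstacle here, since the statement is classical; the one point that deserves care is making explicit that it is precisely the Markov hypothesis — not some weaker relation among $X,Y,Z$ — that forces $I(X;Z\mid Y)=0$, and hence that this is the single place where the hypothesis enters.
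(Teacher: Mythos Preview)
Your proof is correct and is essentially the same argument as the paper's: both use the Markov hypothesis to kill one conditional term and then an entropy/mutual-information inequality to handle the other. The paper merely phrases the two key steps in entropy language---deriving $I(X;Y)=H(X)-H(X|Y)$, using the Markov chain to get $H(X|Y)=H(X|Y,Z)$, and then invoking ``conditioning reduces entropy'' for $H(X|Y,Z)\le H(X|Z)$---which is exactly your $I(X;Z\mid Y)=0$ and $I(X;Y\mid Z)\ge 0$ unwound into entropies.
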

		
	\begin{proof}
		\begin{align*}
			I(X;Y)&=\sum\limits_{x,y} p(x,y)\log\frac{p(x,y)}{p(x)p(y)}\\ &=\sum\limits_{x,y}p(x,y)\log\frac{p(x|y)p(y)}{p(x)p(y)}\\
			&=\sum\limits_{x,y}p(x,y)\log\frac{p(x|y)}{p(x)}\\
			&=-\sum\limits_{x,y}p(x,y)\log p(x) +\sum\limits_{x,y}p(x,y)\log p(x|y)\\
			&=-\sum\limits_{x}p(x)\log p(x) + \sum\limits_{x,y}p(x,y)\log p(x|y)\\
			&=H(X)-H(X|Y).
		\end{align*}
		Because $ X-Y-Z $ is a Markov chain, $ H(X)-H(X|Y)= H(X)-H(X|Y,Z) $. Because conditioning reduces entropy, $ H(X)-H(X|Y,Z)\geq H(X)-H(X|Z)=I(X;Z) $.
	\end{proof}
	
	Applying data processing inequality in the hypothesis testing region, we can see that an increase in mutual information $ I(h;X) $ results in higher TPR and lower FPR rate. We have established the assumption that $ FPR < TPR $ because in the worst case scenario, randomly assigning values to X will on average achieve $ FPR = TPR $. High TPR and low FPR create a large hypothesis testing region. Recall that there exists a bijection between hypothesis testing region $ R(h, X) $  and mode collapse region $ R(P,Q) $. If $ h-X-X' $ is a Markov chain, then $ I(h;X')\leq I(h;X) $ implies $ R(h, X') \subseteq R(h, X) $. And $ R(h, X') \subseteq R(h, X) $ is equivalent to $ R(P',Q')\subseteq R(P,Q).$ \\
	
	Using data processing inequality, we are able to prove that if there exists a Markov chain $ h-X-X' $ between random variables $ h, X $ and $ X' $, then $ R(P',Q') \subseteq R(P,Q). $ However, in the mode collapse region, we currently have $ R(P_\text{in}, Q_\text{in}) \subseteq R(P,Q) \subseteq R(P_\text{out}, Q_\text{out}) $, and we want to show that \\
	$ d_{TV}(P_\text{in}^{m}, Q_\text{in}^{m}) \leq d_{TV}(P^{m},Q^{m}) \leq d_{TV}(P_\text{out}^{m}, Q_\text{out}^{m}) $. To do so, the reverse of data processing inequality needs to hold true, i.e. if $ R(P',Q') \subseteq R(P,Q) $, then there exists a Markov chain $ h-X-X'$. \\
	
	In 1953, David Blackwell published a celebrated result in his paper \textit{Equivalent Comparison of Experiments}: the reverse of data processing inequality is also true \cite{blackwell}.  Let $ B $ and $ C $ be Markov matrices with conditional probabilities of displaying a signal given the current state. Given $ C $ is more informative than $ B $, then there exists a Markov matrix $ M $ such that $ B=CM $. Blackwell's proof is long and convoluted. In this paper we will use a more concise proof by Moshe Lesno and Yishay Spector \cite{blackwellproof}.\\
	
	Denote the set of states in a Markov chain as $ S=\{s_{1},...,s_{n}\} $ and the probabilities associated with these states by $ p=(p_{1},...,p_{n}) $ with $\sum_{i=1}^{n} p_{i}=1$. Denote the signals observed to be $ Y=\{y_{1},...,y_{q}\} $. Denote $ B \in \mathbb{R}^{n\times q}$ to be the Markov matrix that indicates the conditional probabilities of signal $ y $ is displayed given that the current state is $ s $. So $ B_{ij} $ is the probability for a given state $ s_{i} $ that signal $ y_{j} $ will be displayed. Denote the set $ A = \{a_{1},...,a_{r}\} $ to be a set of actions taken. Let the payoff function $ U: A\times S\to \mathbb{R} $ associate with a pair of action and state. $ U $ can be written as a matrix with $ u_{ij} = U(a_{i}, s_{j}) $. Here $ u_{ij} $ represents the payoff gained when action $ a_{i} $ is taken and the state transitions to $ s_{j} $. Denote $ D\in \mathbb{R}^{q\times r} $ to be the decision matrix. $ D$ is a Markov matrix with $ d_{ij} $ as the probability that the decision maker takes action $ a_{j} $ on observing signal $ y_{i} $. Let $ P $ to be the square diagonal matrix containing probability $ p_{i} $ on diagonal entry $ P_{ii} $ . Then the expected payoff is $ \tr(BDUP) $. Denote the maximum expected payoff to be $ F(B,U,P)=\displaystyle \max_{D} \tr(BDUP) $. \\
	
	We have stated the matrices and operations necessary for the proof of reverse data processing inequality. Now we define how much information a matrix contains using maximum expected payoff. \\
	
	\begin{dfn}[This is Definition 1 in \cite{blackwellproof}]
		\label{moreinformative}
		B is more informative than C, denoted as $ C\subseteq B $, \newline
		if $F(B,U,P) \geq F(C,U,P) $.
	\end{dfn}
	
	To facilitate the proof, we also introduce two propositions which concern inner product and trace operations. Later in the proof of reverse data processing inequality, we will directly apply the result of Proposition \ref{innerproduct} and \ref{trinnerproduct}.\\
	
	\newtheorem{prop}{Proposition}
	\begin{prop}[This is Proposition 1 in \cite{blackwellproof}]
		\label{innerproduct}
		$ A,B $ and $ C $ are square matrices with $ A,B,C\in \mathbb{R}^{n \times n} $. Then
		$ \langle AB,C \rangle =\langle A^{T},CB^{T}\rangle $. 
	\end{prop}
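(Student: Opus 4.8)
The plan is to realize the matrix inner product through the trace and then reduce the claimed identity to an identity between two scalar traces, which I can attack with the only two structural facts the trace supplies: invariance under cyclic permutation of the factors, and invariance under transposition, $\tr(M) = \tr(M^{T})$. Concretely, I would take $\langle X, Y\rangle = \tr(X^{T}Y)$ as the working definition of the inner product on $\mathbb{R}^{n\times n}$, which is just the entrywise Frobenius pairing $\sum_{i,j} X_{ij}Y_{ij}$ rewritten as a trace. Under this realization every term appearing in the proposition becomes a single trace of a product formed from $A$, $B$, $C$ and their transposes, so the statement is entirely equivalent to a rearrangement fact about one such trace.

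First I would expand the left-hand side. Using $(AB)^{T} = B^{T}A^{T}$, we get $\langle AB, C\rangle = \tr((AB)^{T}C) = \tr(B^{T}A^{T}C)$. Then I would expand the right-hand side the same way: since $(A^{T})^{T} = A$, we have $\langle A^{T}, CB^{T}\rangle = \tr((A^{T})^{T}(CB^{T})) = \tr(A\,CB^{T})$. At this point the whole proposition has been reduced to the single scalar identity $\tr(B^{T}A^{T}C) = \tr(A\,CB^{T})$, and all remaining work is combinatorial rearrangement of three factors inside one trace. To line the two sides up against a common template I would apply cyclic invariance to the left-hand trace, moving $B^{T}$ from the front to the back, which rewrites it as $\tr(A^{T}CB^{T})$; this already sits in the same $(\,\cdot\,)\,C\,B^{T}$ shape as the right-hand trace $\tr(A\,CB^{T})$.

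The comparison therefore collapses to the \emph{leading factor}, and I expect this to be the main obstacle of the argument: reconciling the $A^{T}$ produced by the expansion of $\langle A^{T},\cdot\rangle$ with the bare $A$ demanded by the other side. This is precisely the step where the transpose bookkeeping must be carried out deliberately rather than by rote, and it is where the substance of the proof lives: one must show that $\tr(A^{T}CB^{T})$ and $\tr(A\,CB^{T})$ name the same scalar in the setting at hand. I would attempt this by applying $\tr(M)=\tr(M^{T})$ to the full product $A\,CB^{T}$ and re-cycling, tracking each transpose explicitly and keeping the factor order fixed, since everything preceding this point (the two expansions and the single cyclic shift) is routine. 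If that alignment goes through, the two traces coincide and the proposition follows immediately; the care required at exactly this juncture is what separates a correct derivation from a transpose slip, so it is the one place I would check with full rigor.
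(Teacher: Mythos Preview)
Your approach---realize both sides as traces and use cyclic invariance---is exactly the paper's. The paper's entire proof is the single line
\[
\langle AB,C\rangle \;=\; \tr(B^{T}A^{T}C) \;=\; \tr(A^{T}CB^{T}) \;=\; \langle A^{T}, CB^{T}\rangle,
\]
so your first two moves match it verbatim. Where you diverge is that you (correctly, under the Frobenius convention $\langle X,Y\rangle=\tr(X^{T}Y)$) expand the right-hand side as $\langle A^{T},CB^{T}\rangle=\tr\bigl((A^{T})^{T}CB^{T}\bigr)=\tr(ACB^{T})$ and then flag the mismatch with $\tr(A^{T}CB^{T})$ as the crux.

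That mismatch is real and cannot be repaired: with the Frobenius pairing the stated identity is \emph{false}. Take $n=2$, $B=I$, and $A=C=E_{12}$ (the elementary matrix with a single $1$ in position $(1,2)$); then $\langle AB,C\rangle=\langle E_{12},E_{12}\rangle=1$ while $\langle A^{T},CB^{T}\rangle=\langle E_{21},E_{12}\rangle=0$. The paper's last equality $\tr(A^{T}CB^{T})=\langle A^{T},CB^{T}\rangle$ silently switches to the convention $\langle X,Y\rangle=\tr(XY)$, which is inconsistent with its first step. What your cyclic shift actually proves is the (true) identity $\langle AB,C\rangle=\langle A,CB^{T}\rangle$, which drops out immediately from $\tr(B^{T}A^{T}C)=\tr(A^{T}CB^{T})$ with no further work; no amount of applying $\tr(M)=\tr(M^{T})$ and re-cycling on $\tr(ACB^{T})$ will manufacture the extra transpose on $A$. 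The ``main obstacle'' you isolated is therefore not a gap in your argument but a defect in the statement itself.
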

	
	\begin{proof}
		\[\langle AB,C \rangle=\tr(B^{T}A^{T}C) = \tr(A^{T}CB^{T})=\langle A^{T}, CB^{T} \rangle.\]
	\end{proof}

	\begin{prop}[This is Proposition 2 in \cite{blackwellproof}]
		\label{trinnerproduct}
		$ A,B,C $ and $ D $ are matrices with $ A \in \mathbb{R}^{n\times q}, B \in\mathbb{R}^{q\times r}$, $ C \in\mathbb{R}^{r\times n}, D \in\mathbb{R}^{n\times n} $. Then $ \tr(ABCD)= \langle ABC,D \rangle. $ 
	\end{prop}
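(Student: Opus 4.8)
\emph{Proof proposal.} The statement is a pure trace identity, and the plan is to establish it in the same spirit as Proposition~\ref{innerproduct}: rewrite the inner product as a trace and then push everything through the two elementary invariances of the trace, namely $\tr(MN)=\tr(NM)$ (cyclic invariance) and $\tr(M)=\tr(M^{\top})$ (invariance under transposition). No machinery beyond these two facts is needed.

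First I would dispose of the bookkeeping: check that both sides are well formed. Since $A\in\mathbb{R}^{n\times q}$, $B\in\mathbb{R}^{q\times r}$ and $C\in\mathbb{R}^{r\times n}$, the product $ABC$ is square of size $n\times n$, so $ABC$ and $D\in\mathbb{R}^{n\times n}$ are conformable and both $\tr(ABCD)$ and $\langle ABC,D\rangle$ make sense. Writing the inner product as $\langle X,Y\rangle=\tr(X^{\top}Y)$, I would then compute
\[
\langle ABC,D\rangle=\tr\big((ABC)^{\top}D\big)=\tr\big(D^{\top}ABC\big)=\tr\big(ABC\,D^{\top}\big),
\]
where the second equality uses $\tr(M)=\tr(M^{\top})$ applied to $M=(ABC)^{\top}D$ together with $((ABC)^{\top}D)^{\top}=D^{\top}ABC$, and the third is a cyclic shift. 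This reduces the claim to the identity $\tr(ABC\,D^{\top})=\tr(ABCD)$.

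I do not expect a real obstacle — the computation is essentially a one-liner — but the step that deserves a word is this last reduction, which is exactly the place where $D^{\top}$ has to be replaced by $D$. This is harmless in the setting where the proposition is actually invoked: in the Blackwell argument the matrix occupying the slot of $D$ is the diagonal prior matrix (denoted $P$ in the set-up above), for which $D^{\top}=D$, so the identity closes. If instead one wants a version that is convention-proof and needs no symmetry, the cleanest move is to run the chain in the other direction, starting from $\tr(ABCD)=\tr(D\,ABC)$ and reading the right-hand side off directly as $\langle D^{\top},ABC\rangle=\langle (ABC)^{\top},D\rangle$; matching this against the form $\langle ABC,D\rangle$ used downstream is then the only remaining thing to note.
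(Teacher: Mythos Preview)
Your argument is correct, and you are right to flag the $D$ versus $D^{\top}$ issue: as stated for an arbitrary $D\in\mathbb{R}^{n\times n}$ the identity $\tr(ABCD)=\langle ABC,D\rangle$ is false in general, and it only closes because in the Blackwell application the slot of $D$ is occupied by the diagonal prior matrix $P$, which is symmetric. You handle this carefully.

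The paper, however, does not go through trace invariances at all. Its proof is a bare index computation:
\[
\tr(ABCD)=\sum_{i}(ABC)_{ii}D_{ii}=\sum_{i,j}(ABC)_{ij}D_{ij}=\langle ABC,D\rangle,
\]
which silently uses that $D$ is diagonal in both the first equality (only the diagonal entries of $D$ contribute to the trace of the product) and the second (the off-diagonal terms $D_{ij}$ vanish). So the two routes differ in presentation rather than in substance: the paper's entrywise calculation bakes the diagonality assumption directly into the indices, whereas your trace-identity chain isolates the assumption as the single step $\tr(ABC\,D^{\top})=\tr(ABCD)$ and names it explicitly. Your version is arguably cleaner pedagogically because it makes visible exactly where and why symmetry of $D$ is needed, while the paper's index argument is shorter but hides the hypothesis.
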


	\begin{proof}
		\[\tr(ABCD)= \sum_{i}(ABC)_{ii}D_{ii} =  \sum_{i,j}(ABC)_{ij}D_{ij} = \langle ABC,D \rangle.\]
	\end{proof}

	With all definitions and propositions in place, now we are ready to introduce reverse data processing inequality. In this proof, the notation follows those defined prior to Definition \ref{moreinformative}.

	\begin{thm}[Reverse Data Processing Inequality \cite{blackwell}\cite{blackwellproof}]
		\label{rdpi}
		$ B $ is more informative than $ C $, i.e. $ C\subseteq B $ if and only if there exists a Markov matrix $ M $ such that $ C = BM $. 
	\end{thm}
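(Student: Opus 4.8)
The plan is to prove the two implications of Theorem \ref{rdpi} separately. The ``if'' direction is a direct computation, and the ``only if'' direction is the substantive part and will go through a separating‑hyperplane argument in the space of matrices equipped with the entrywise inner product $\langle\cdot,\cdot\rangle$.

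For the ``if'' direction, suppose $C = BM$ for some Markov matrix $M$. The key observation is that Markov matrices are closed under multiplication: if $M$ and $D$ are row‑stochastic then so is $MD$, since $(MD)_{ij}\ge 0$ and $\sum_j (MD)_{ij} = \sum_k M_{ik}\sum_j D_{kj} = \sum_k M_{ik} = 1$. Hence for every payoff matrix $U$, every diagonal probability matrix $P$, and every decision matrix $D$ we have $\tr(CDUP) = \tr\bigl(B(MD)UP\bigr) \le \max_{D'}\tr(BD'UP) = F(B,U,P)$, because $MD$ is itself an admissible decision matrix for $B$. Taking the maximum over $D$ gives $F(C,U,P)\le F(B,U,P)$ for all $U,P$, i.e.\ $C\subseteq B$ in the sense of Definition \ref{moreinformative}.

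For the ``only if'' direction I would argue by contraposition: assuming there is \emph{no} Markov matrix $M$ with $C = BM$, I will produce $U$ and $P$ with $F(C,U,P) > F(B,U,P)$, contradicting $C\subseteq B$. Consider the set $K = \{\,BM : M \text{ a Markov matrix of the appropriate shape}\,\}$. The set of Markov matrices of a fixed shape is a product of probability simplices, hence compact and convex, and $M\mapsto BM$ is linear, so $K$ is compact and convex; by assumption $C\notin K$. By the separating‑hyperplane theorem there is a matrix $L$ of the same shape as $C$ with $\langle C,L\rangle > \max_{Y\in K}\langle Y,L\rangle = \max_{M}\langle BM,L\rangle$. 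It then remains to translate this into the payoff language, which is exactly where Propositions \ref{innerproduct} and \ref{trinnerproduct} enter. Taking the action set to have the same cardinality as the signal set of $C$, set $P=\tfrac1n I$ and $U = nL^{T}$, so that $(UP)^{T}=PU^{T}=L$; using the identity matrix as a (valid) decision matrix for $C$ gives $F(C,U,P)\ge \tr(C\,I\,U\,P) = \langle C,(UP)^{T}\rangle = \langle C,L\rangle$, while for $B$ the trace identities give $F(B,U,P) = \max_D \tr(BDUP) = \max_M \langle BM,L\rangle$. Combining with the separation inequality yields $F(C,U,P)\ge \langle C,L\rangle > \max_M \langle BM,L\rangle = F(B,U,P)$, the desired contradiction.

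The main obstacle is the converse direction, and within it two points require care: first, that $K$ is genuinely closed (equivalently compact), so that a point outside it admits a \emph{strict} separation --- this rests on compactness of the polytope of Markov matrices together with continuity of left multiplication by $B$; and second, the bookkeeping that converts the abstract separating functional $\langle\,\cdot\,,L\rangle$ into a legitimate triple $(D,U,P)$ with $D$ Markov and $P$ a diagonal probability matrix, for which Propositions \ref{innerproduct}--\ref{trinnerproduct} are precisely the identities needed. The ``if'' direction, by contrast, reduces to the one‑line fact that a product of Markov matrices is Markov.
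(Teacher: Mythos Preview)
Your proposal is correct and follows essentially the same route as the paper: the ``if'' direction via closure of Markov matrices under multiplication, and the ``only if'' direction by contraposition, showing $K=\{BM:M\text{ Markov}\}$ is compact convex, strictly separating $C$ from $K$ by a linear functional, and then converting that functional into a payoff matrix $U$ (the paper does this for a general invertible $P$ via $U^{T}=P^{-1}\hat U$, whereas you make the cleaner specific choice $P=\tfrac1n I$, $U=nL^{T}$). The use of Propositions~\ref{innerproduct}--\ref{trinnerproduct} to pass between $\tr(\cdot)$ and $\langle\cdot,\cdot\rangle$, and of the identity as the decision matrix for $C$, matches the paper exactly.
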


	\begin{proof}
		If $ C = BM $, then $ \tr(CDUP)=\tr(BMDUP) $. Because $ M $ is a Markov matrix, $ \displaystyle\max_{D} \tr(CDUP) = \displaystyle\max_{D}(BMDUP) \leq \displaystyle\max_{D}\tr(BDUP),$ i.e. $ C \subseteq B $.\\
		
		Suppose for every Markov matrix $ M $ , $ C\neq BM $. Let $ S=\{A|\exists M$ that is a Markov matrix such that $A=BM\}$. Suppose $ M=cM_{1}+(1-c)M_{2} $ where $ 0\leq c \leq 1 $. Let $ m_{ij} =(1-\lambda)m_{1ij}+\lambda m_{2ij}$ with $ 0 \leq\lambda\leq 1 $. Since $ m_{1ij},m_{2ij} >0 $,
		\begin{align*}
			\sum_{j=1}^{n}m_{ij}&= \sum_{j=1}^{n}((1-\lambda)m_{1ij}+\lambda m_{2ij})\\
			&=(1-\lambda)\sum_{j=1}^{n} m_{1ij} + \lambda\sum_{j=1}^{n} m_{2ij}\\ &= 1-\lambda+\lambda = 1.
		\end{align*}
		 Therefore the set of Markov matrix is convex. Then $ P(cM_{1}+(1-c)M_{2})=PM\in S. $ Thus $ S $ is convex.\\
		
		$ S $ is the image of the set of Markov matrices under the continuous map $ f(M)=BM $. Since $ m_{ij}\geq 0, \forall i,j $ and $$\sum_{j=1}^{n} m_{ij} = 1,$$ the set contains all of its limit points. The set of Markov matrices is bounded because $ 0\leq m_{ij}\leq 1, \forall i,j.$ Therefore the set of Markov matrices is compact. The image of a compact set on a continuous map is compact. A compact set in $ \mathbb{R}^{q\times q} $ is closed. So the set S is closed. \\
		
		Using similar reasoning, we can deduce that C is compact. $ C $ and $ S $ are disjoint, nonempty sets that are closed and convex, and $ C $ is compact. By the hyperplane separation theorem, there exists a matrix $ \hat{U} $ such that $ \forall M,$ $\langle(BM)^{T},\hat{U}\rangle < \langle C^{T}, U\rangle.$ \\
		
		Let the payoff matrix $ U $ be $ U^{T} = P^{-1}\hat{U} $. Using Proposition \ref{innerproduct} and \ref{trinnerproduct}, we have
		\begin{align*}
			\tr(BMUP) &= \langle BMU, P \rangle \\
			&= \langle BM\hat{U}^{T}(P^{-1})^{T}, P \rangle \\
			&= \langle (BM)^{T}, P(\hat{U}^{T}(P^{-1})^{T})^{T} \rangle \\
			&= \langle (BM)^{T}, PP^{-1}\hat{U} \rangle\\ 
			&= \langle(BM)^{T}, \hat{U}\rangle.
		\end{align*} 
		
		Using the same computations, we can obtain $ \tr(CUP) = \langle C^{T}, \hat{U}\rangle $. From $\langle(BM)^{T},\hat{U}\rangle < \langle C^{T}, U\rangle $, we can deduce that $ \tr(BMUP) < \tr(CUP)$.\\ 
		
		Therefore $\displaystyle\max_{M}\tr(BMUP) < \tr(CIUP) < \displaystyle\max_{M}\tr(CMUP).$ \\
		
		Thus we have $ B \nsubseteq C. $
	\end{proof}

	In Theorem \ref{rdpi}, we know that $ CDU $ is the payoff matrix when state $ s_{i} $ transitions to state $ s_{j} $. $ \tr(CDUP)=\langle CDU, P \rangle $, which denotes the average payoff when we transition from state $ s_{i} $ to state $ s_{j} $. In the hypothesis testing region, if the average payoff is greater, then the hypothesis testing region is greater. Now we apply Theorem 2 to the GAN discriminator context. Recall that for random variable $ X $, target distribution $ P(S) = \mathbb{P}(X\in S_{r}|h=1) $ and generated distribution $ Q(S) = \mathbb{P}(X\in S_{r}|h=0)$. For random variable $ X' $, target distribution $ P'(S) = \mathbb{P}(X'\in S'_{r}|h'=1) $ and generated distribution $ Q'(S) = \mathbb{P}(X'\in S'_{r}|h'=0)$. If there exists an inequality $ R(P',Q')\subseteq R(P,Q) $ between mode collapse regions, then by the bijection between mode collapse region and hypothesis testing region, the inequality between hypothesis testing regions $ R(h, X')\subseteq R(h, X) $. Let $ B $ denote the Markov matrix whose entries are the conditional probabilities of observing $ X $ given hypothesis $ h $. Let $ C $ denote the Markov matrix whose entries are the conditional probabilities of observing $ X' $ given $ X $. Because $ R(h, X')\subseteq R(h, X) $, $ B $ is more informative than $ C $. Therefore, there exists a Markov matrix $ M $ such that $ C = BM $. Thus, there exists a Markov chain $ h-X-X' $. From the Markov chain $ h-X-X' $, we can deduce that $ d_{TV}(P_\text{in}^{m}, Q_\text{in}^{m}) \leq d_{TV}(P^{m},Q^{m}) \leq d_{TV}(P_\text{out}^{m}, Q_\text{out}^{m}) $.\\
	
	Now we are going to use reverse data processing inequality to prove Theorem \ref{dTVbounds}.
	
	\begin{proof}[Proof to Theorem \ref{dTVbounds}](This is Proof of Theorem 3 in \cite{pacgan})

	 From Figure \ref{3bounds}, we can see that
		$$ R_\mathrm{inner}(P,Q)\subseteq R(P,Q)\subseteq R_\mathrm{outer}(P,Q). $$	
	 thus we have 
		$$ R_\mathrm{inner}(P^{m},Q^{m})\subseteq R(P^{m},Q^{m})\subseteq R_\mathrm{outer}(P^{m},Q^{m}) .$$
	By reverse data processing inequality, 
		$$ \displaystyle\min_{0\leq\alpha\leq 1-\tau} d_{TV}(P_\mathrm{inner}(\alpha)^{m},Q_\mathrm{inner}(\alpha,\tau)^{m}) \leq d_{TV}(P^{m},Q^{m}) \leq d_{TV}(P_\mathrm{outer}(\tau)^{m},Q_\mathrm{outer}(\tau)^{m}). $$
		
	To think in terms of probability, the probability that a point is out of the region covered by total variation distance is $ 1-\tau $. The probability of a point in the  $m^{th}$  dimension is out of the region covered by total variation distance is $ (1-\tau)^{m} $. Therefore, the probability of the point in the $ m^{th} $ dimension is in the region covered by total variation distance is $ 1-(1-\tau)^{m} $. So $$ d_{TV}(P_\mathrm{outer}(\tau)^{m},Q_\mathrm{outer}(\tau)^{m})=1-(1-\tau)^{m}. $$
	
	To achieve the minimum of $ d_{TV}(P^{m},Q^{m}), $ we need to find the optimal $ \alpha $ that minimizes $ d_{TV}(P^{m},Q^{m}),$ with $ \alpha $ denotes the horizontal distance between the point farthest from the line $ \varepsilon=\delta $ and $ \tau $ in Figure \ref{3bounds}.
	
	The lower bound for $ d_{TV}(P^{m},Q^{m}) $ is $$ \displaystyle\min_{0\leq\alpha\leq 1-\tau} d_{TV}(P_\mathrm{inner}(\alpha)^{m},Q_\mathrm{inner}(\alpha,\tau)^{m}) $$ with $ P_\mathrm{inner}=[1-\alpha,\alpha] $ and $ Q_\mathrm{inner}=[1-\alpha-\tau,\alpha+\tau] $.
	\end{proof}

	 \vspace{3mm}
	
	\section{VEEGAN}
	\subsection{Structure of VEEGAN}
	To mitigate mode collapse, VEEGAN introduces a new neural network, a reconstructor, along with the generator and the discriminator in its architecture. The generator maps inputs drawn from a Gaussian distribution to the synthetic data; the discriminator maps data to binary labels of real or synthetic. The reconstructor maps data to outputs that form a Gaussian distribution. The reconstructor has two objectives: 1. Map all data in the data space to outputs in a Gaussian distribution; 2. Act as the inverse of the generator. If the reconstructor maps all data in the data space to outputs in a Gaussian distribution and acts as an inverse of the generator at the same time, it in turn encourages the generator to map from inputs drawn from a  Gaussian distribution to synthetic data that match the target data distribution, thus resolving mode collapse \cite{veegan}. Figure \ref{VEEGANgraph1} and \ref{VEEGANgraph2} shows the structure of VEEGAN, a system with a generator, a discriminator, and a reconstructor.\\
	
	\begin{figure}[H]
		\centering
		\begin{subfigure}{0.5\textwidth}
			\begin{tikzpicture}
			\begin{axis}
			[every axis plot post/.append style= {mark=none,domain=-5:5,samples=50,smooth}, 
			axis x line*=bottom, 
			axis y line* = none,
			enlargelimits=upper,
			axis line style={draw=none},
			tick style={draw=none},
			xticklabels={,,},
			yticklabels={,,},
			clip = false] 
			\addplot [color = black] {gauss(0,1)};
			\addplot [->] coordinates {(-5,0) (6,0)} node[right]{$ z $};
			\node at (800, 40) {$ G_{\gamma} $};
			\node at (850, 150) {$ F_{\theta} $};
			\node at (100, 20){$ p_{0}(z) $};
			\node at (90, 130) {$ p(x) $};
			\addplot [color = green] [->] coordinates {(0,0) (1.8,1)};
			\addplot [color = green] [->] coordinates {(0.8,0) (2.4,1)};
			\addplot [color = green] [->] coordinates {(0.4,0) (2.2,1)};
			\addplot [color = green] [->] coordinates {(2,0) (3,1)};
			\addplot [color = green] [->] coordinates {(-0.5,0) (1,1)};
			\addplot [color = green] [->] coordinates {(1,1) (-2,2)};
			\addplot [color = green] [->] coordinates {(1.8,1) (-1.5,2)};
			\addplot [color = green] [->] coordinates {(2.2,1) (-0.5,2)};
			\addplot [color = green] [->] coordinates {(2.4,1) (0,2)};
			\addplot [color = green] [->] coordinates {(3,1) (1.5,2)};
			\addplot [color = black]{gauss(-2,1)+gauss(2,1)+1};
			\addplot [->] coordinates { (-5,1) (6,1) } node[right]{$ x $};
			\addplot [color = green] {gauss(0,1)+2};
			\addplot [color = violet] {gauss(-3,0.5) + gauss(1,1) +2};
			\addplot [color = violet] [->] coordinates {(-2,1) (-3,2)};
			\addplot [color = violet] [->] coordinates {(-3,1) (-3.5,2)};
			\addplot [color = violet] [->] coordinates {(-1,1) (-2.5,2)};
			\addplot [color = violet] [->] coordinates {(2.7,1) (3,2)};
			\addplot [color = violet] [->] coordinates {(2,1) (0,2)};
			\addplot [color = violet] [->] coordinates {(2.5,1) (0.3,2)};
			\addplot [->] coordinates { (-5,2) (6,2) } node[right]{$ \hat{z} $};
			\end{axis}
			\end{tikzpicture}
			\caption{$ F_{\theta} $ is trained to be the inverse of $ G_{\gamma} $.}
			\label{VEEGANgraph1}
		\end{subfigure}%
		\begin{subfigure}{0.5\textwidth}
			\begin{tikzpicture}
			\begin{axis}
			[every axis plot post/.append style= {mark=none,domain=-5:5,samples=50,smooth}, 
			axis x line*=bottom, 
			axis y line* = none,
			enlargelimits=upper,
			axis line style={draw=none},
			tick style={draw=none},
			xticklabels={,,},
			yticklabels={,,},
			clip = false] 
			\node at (800, 40) {$ G_{\gamma} $};
			\node at (850, 150) {$ F_{\theta} $};
			\node at (100, 20){$ p_{0}(z) $};
			\node at (90, 130) {$ p(x) $};
			\addplot [color = black] {gauss(0,1)};
			\addplot [->] coordinates { (-5,0) (6,0) } node[right]{$ z $};
			\addplot [color = green] [->] coordinates {(-1,0) (1.5,1)};
			\addplot [color = green] [->] coordinates {(0,0) (1.8,1)};
			\addplot [color = green] [->] coordinates {(0.8,0) (2.4,1)};
			\addplot [color = green] [->] coordinates {(0.4,0) (2.2,1)};
			\addplot [color = green] [->] coordinates {(2,0) (3,1)};
			\addplot [color = green] [->] coordinates {(-0.5,0) (1,1)};
			\addplot [color = black]{gauss(-2,1)+gauss(2,1)+1};
			\addplot [->] coordinates { (-5,1) (6,1) } node[right]{$ x $};
			\addplot [color = violet] [->] coordinates {(-1,1) (0,2)};
			\addplot [color = violet] [->] coordinates {(-2,1) (-1,2)};
			\addplot [color = violet] [->] coordinates {(-2.5,1) (-1.5,2)};
			\addplot [color = violet] [->] coordinates {(-3,1) (-2,2)};
			\addplot [color = violet] [->] coordinates {(1,1) (-0.5,2)};
			\addplot [color = violet] [->] coordinates {(2,1) (1,2)};
			\addplot [color = violet] [->] coordinates {(2.5,1) (0.5,2)};
			\addplot [color = violet] [->] coordinates {(3,1) (2,2)};
			\addplot [color = violet] {gauss(0,1)+2};
			\addplot [color = green] {gauss(1,0.75) +2};
			\addplot [color = green] [->] coordinates {(2,1) (1,2)};
			\addplot [color = green] [->] coordinates {(1,1) (0.5,2)};
			\addplot [color = green] [->] coordinates {(2.5,1) (1.5,2)};
			\addplot [color = green] [->] coordinates {(3,1) (2,2)};
			\addplot [->] coordinates { (-5,2) (6,2) } node[right]{$ \hat{z} $};
			\end{axis}
		\end{tikzpicture}
	\caption{$ F_{\theta} $ is trained to map data to a Gaussian distribution.}
	\label{VEEGANgraph2}
		\end{subfigure}
	\caption{The two figures are excerpted from Figure 1 in \cite{veegan}. These figures illustrate how a reconstructor $ F_{\theta} $ helps detect mode collapse in generator $ G_{\gamma}$.}
	\label{VEEGANgraph}	
	\end{figure}
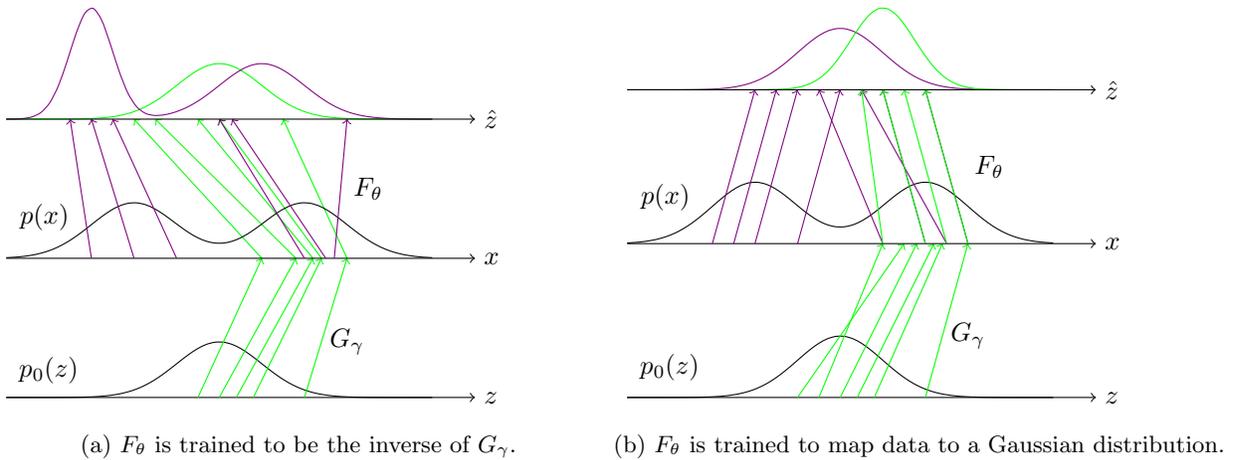		
	
Let $ z $ denote Gaussian noise vectors which are inputs to the generator. Let $ p_{0}(z) $ denote the distribution of input to generator. Let $ x $ denote the data provided to the VEEGAN architecture. Let $ p(x) $ denote the target data distribution. Let $ \hat{z} $ denote noise vectors that are outputs of the reconstructor. Let $ \gamma $ denote the weights of the neural network in the generator, and let $ G_{\gamma} $ denote the generator, which maps Gaussian noise vectors to synthetic data. Let $ \theta $ denote the weights of the neural network in the reconstructor, and let $ F_{\theta} $ denote the reconstructor, which maps data to Gaussian noise vectors. In Figure \ref{VEEGANgraph1} and \ref{VEEGANgraph2}, $ p_{0}(z) $ is a Gaussian distribution. $ p(x) $ is the target data distribution. The purple distribution on the top is the output of reconstructor $ F_{\theta}(x) $ mapped from the target data distribution $ p(x) $. The green distribution on the top is the output of reconstructor $ F_{\theta}(G_{\gamma}(z)) $ mapped from the generated distribution $ G_{\gamma}(z) $ \cite{veegan}. \\
 
 In Figure \ref{VEEGANgraph1}, suppose $ F_{\theta} $ is trained to be the inverse of $ G_{\gamma} .$ Because $ z $ is drawn from a Gaussian distribution, the outputs of reconstructor $ F_{\theta}(G_{\gamma}(z)) $ will also form a Gaussian distribution. However, $ G_{\gamma}(z) $ does not address all modes in $ p(x) $. As a result, $ F_{\theta} (x)$ will differ from $ F_{\theta}(G_{\gamma}(z)) $  and will likely be non-Gaussian. We detect mode collapse from the discrepancy between $ F_{\theta}(x) $ and $ F_{\theta}(G_{\gamma}(z)) $. In Figure \ref{VEEGANgraph2}, $ F_{\theta} $ is trained to map samples drawn from $ p(x) $ to outputs that form a Gaussian distribution. In this case, if $ G_{\gamma} $ suffers from mode collapse, $ F_{\theta}(G_{\gamma}(z)) $ will have a different distribution from $ F_{\theta}(x) $. Such difference will provide a strong learning signal to $ \gamma $ and $ \theta $ when the neural network performs update on parameters using stochastic gradient descent \cite{veegan}. \\
 
 We want to $ F_{\theta} $ to both act as the inverse of $ G_{\gamma} $ and map the target data distribution to a Gaussian. Mathematically, we can represent the difference between $ z $ and $ F_{\theta}(G_{\gamma}(z)) $ by the expected autoencoder $ l_{2} $ loss $ \mathbb{E}(\lVert z-F_{\theta}(G_{\gamma}(z))\rVert_{2}^{2}).$ We can represent how good $ F_{\theta} $ maps the true data distribution to a Gaussian by cross entropy $ H(z, F_{\theta}(x)) $. The probability distribution of noise vector $ z $ is Gaussian. Cross entropy measures how different the probability distribution of $ F_{\theta}(x) $ is from the probability distribution $ z $. If the two probability distributions are identical, then cross entropy $ H(z, F_{\theta}(x)) $ is at its minimum and equals to entropy of $ z $. For the reconstructor to both act as the inverse of $ G_{\gamma} $ and map the target data distribution to a Gaussian, we want to minimize the objective function 
 $ O_\mathrm{entropy}(\gamma, \theta) = \mathbb{E}(\lVert z-F_{\theta}(G_{\gamma}(z))\rVert_{2}^{2}) + H(z, F_{\theta}(x)).$ The cross entropy is written as $ H(z,F_{\theta}(x))=-\int p_{0}(z)\log\int p(x)p_{\theta}(\hat{z}|x) dx dz $. When $ z $ and $ x $ are in high dimensions, the integral is difficult to compute. Thus, we need to approximate $ O_\mathrm{entropy}(\gamma,\theta) $ with a more computable method.\\
 
 \subsection{Approximation of Objective Function}
 Let $ p_{\theta}(\hat{z}|x) $ be the distribution of reconstructor output given that the input is $ x $. Applied to $ x \sim p(x),$ let the output distribution of the reconstructor be \[p_{\theta}(\hat{z})=\int p_{\theta}(\hat{z}|x)p(x) dx.\] 
  
 \[H(z,F_{\theta}(x))=-\int p_{0}(z)\log p_{\theta}(\hat{z}) dz = -\int p_{0}(z)\log\int p(x)p_{\theta}(\hat{z}|x) dx dz.\]
  
 However, the integral on the right hand side is difficult to compute in closed form. Instead, we find an upper bound for $ H(z,F_{\theta}(x))$ by introducing a new term into the integral: $ q_{\gamma}(x|z)$. Denote $ q_{\gamma}(x|z) $ as the output of generator $ G_{\gamma} $. Using Jensen's inequality, adding and manipulating trivial terms in the integral, the authors of VEEGAN are able to find a computable KL divergence as upper bound to cross entropy $ H(z, F_{\theta}(x)) $. \\
 
 \begin{prop}[This is Equation 4 in \cite{veegan}]
 	\label{KL uppper bound}
 	Let $ q_{\gamma}(x|z)$ be the output of $ G_{\gamma} $. Then
 	\[H(z,F_{\theta}(x))=-\int p_{0}(z)\log p_{\theta}(\hat{z}) dz \leq \KL[q_{\gamma}(x|z)p_{0}(z) \parallel p_{\theta}(\hat{z}|x)p(x)] - \mathbb{E}[\log p_{0}(z)].\] 
 	
 \end{prop}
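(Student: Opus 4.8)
The plan is to start from the definition $H(z,F_{\theta}(x))=-\int p_{0}(z)\log p_{\theta}(\hat z)\,dz$, with the marginal $p_{\theta}(\hat z)=\int p_{\theta}(\hat z|x)p(x)\,dx$, and to manufacture the generator density $q_{\gamma}(x|z)$ inside the inner integral so that Jensen's inequality can be applied. For each fixed $z$ I would rewrite
\[p_{\theta}(\hat z)=\int p_{\theta}(\hat z|x)p(x)\,dx=\int q_{\gamma}(x|z)\,\frac{p_{\theta}(\hat z|x)p(x)}{q_{\gamma}(x|z)}\,dx,\]
which turns the inner integral into an expectation of the ratio $p_{\theta}(\hat z|x)p(x)/q_{\gamma}(x|z)$ over $x\sim q_{\gamma}(\cdot|z)$; this step is legitimate provided $q_{\gamma}(x|z)>0$ wherever $p_{\theta}(\hat z|x)p(x)>0$, an absolute-continuity condition I would flag but not belabor.

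The next step is Jensen's inequality. Since $\log$ is concave, $\log\mathbb{E}_{q_{\gamma}}[Y]\geq\mathbb{E}_{q_{\gamma}}[\log Y]$, so
\[\log p_{\theta}(\hat z)\geq\int q_{\gamma}(x|z)\log\frac{p_{\theta}(\hat z|x)p(x)}{q_{\gamma}(x|z)}\,dx.\]
Negating reverses the inequality, and then integrating against $p_{0}(z)$ (using Fubini to justify the order of integration) yields
\[H(z,F_{\theta}(x))\leq\int\!\!\int q_{\gamma}(x|z)p_{0}(z)\log\frac{q_{\gamma}(x|z)}{p_{\theta}(\hat z|x)p(x)}\,dx\,dz.\]

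Finally I would recast the right-hand side into the advertised form by inserting a factor $p_{0}(z)/p_{0}(z)$ inside the logarithm and splitting the integral:
\[\int\!\!\int q_{\gamma}(x|z)p_{0}(z)\log\frac{q_{\gamma}(x|z)p_{0}(z)}{p_{\theta}(\hat z|x)p(x)}\,dx\,dz\;-\;\int\!\!\int q_{\gamma}(x|z)p_{0}(z)\log p_{0}(z)\,dx\,dz.\]
Viewing $q_{\gamma}(x|z)p_{0}(z)$ and $p_{\theta}(\hat z|x)p(x)$ as two joint densities on the $(x,z)$-space (identifying $\hat z$ with $z$), the first term is exactly $\KL[q_{\gamma}(x|z)p_{0}(z)\parallel p_{\theta}(\hat z|x)p(x)]$. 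In the second term $\int q_{\gamma}(x|z)\,dx=1$ collapses the $x$-integral, leaving $\int p_{0}(z)\log p_{0}(z)\,dz=\mathbb{E}[\log p_{0}(z)]$, which appears with a minus sign. Combining the two gives $H(z,F_{\theta}(x))\leq\KL[q_{\gamma}(x|z)p_{0}(z)\parallel p_{\theta}(\hat z|x)p(x)]-\mathbb{E}[\log p_{0}(z)]$, as claimed.

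I expect the main obstacle to be bookkeeping rather than depth: keeping the direction of the inequality correct after negating the Jensen bound, tracking which variable is integrated where so that the ``trivial'' identity $\int q_{\gamma}(x|z)\,dx=1$ is used in the right place, and correctly recognizing the two-dimensional integral as a KL divergence between joint distributions. The one hypothesis worth stating explicitly is that $q_{\gamma}(x|z)$ dominates $p_{\theta}(\hat z|x)p(x)$ in the appropriate sense, so that the ratio in the logarithm and the resulting KL divergence are well defined.
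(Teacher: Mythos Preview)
Your proof is correct and follows essentially the same route as the paper: introduce $q_{\gamma}(x|z)$ as an importance weight inside the inner integral, apply Jensen's inequality to $-\log$, integrate against $p_{0}(z)$, and then multiply and divide by $p_{0}(z)$ inside the logarithm to recognize the joint KL divergence, using $\int q_{\gamma}(x|z)\,dx=1$ to collapse the leftover term to $-\mathbb{E}[\log p_{0}(z)]$. Your version is in fact slightly more careful than the paper's, since you flag the absolute-continuity requirement on $q_{\gamma}(x|z)$ and the use of Fubini, both of which the paper leaves implicit.
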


\begin{proof}(This proof is from Appendix A in \cite{veegan})
	\[-\log p_{\theta}(\hat{z}) = -\log \int p_{\theta}(\hat{z}|x)p(x)\frac{q_{\gamma}(x|z)}{q_{\gamma}(x|z)}dx.\]
	By Jensen's inequality, let the probability distribution be $ q_{\gamma}(x|z) $ and the random variable be $ \frac{p_{\theta}(\hat{z}|x)p(x)}{q_{\gamma}(x|z)} .$
	Because $ -\log(\frac{p_{\theta}(\hat{z}|x)p(x)}{q_{\gamma}(x|z)}) $ is strictly convex,
	\[ -\log \int p_{\theta}(\hat{z}|x)p(x)\frac{q_{\gamma}(x|z)}{q_{\gamma}(x|z)}dx\leq - \int q_{\gamma}(x|z) \log \frac{p_{\theta}(\hat{z}|x)p(x)}{q_{\gamma}(x|z)}dx\]
	
	Therefore, 
	\begin{align*}
		-\int p_{0}(z)\log p_{\theta}(\hat{z})dz &=  -\int p_{0}(z) \log \int p_{\theta}(\hat{z}|x)p(x)\frac{q_{\gamma}(x|z)}{q_{\gamma}(x|z)}dxdz \\
		&\leq -\int\int p_{0}(z)q_{\gamma}(x|z) \log \frac{p_{\theta}(\hat{z}|x)p(x)}{q_{\gamma}(x|z)}dxdz 
	\end{align*}
	 
	We add a trivial KL divergence to the right hand side of the inequality,
	\begin{align*}
		 -\int p_{0}(z)\log p_{\theta}(\hat{z})dz &\leq -\int\int p_{0}(z)q_{\gamma}(x|z) \log \frac{p_{\theta}(\hat{z}|x)p(x)}{q_{\gamma}(x|z)}dxdz \\
		&= \int\int p_{0}(z)q_{\gamma}(x|z) \log \frac{q_{\gamma}(x|z)}{p_{\theta}(\hat{z}|x)p(x)}dxdz + \int p_{0}(z)\log \frac{p_{0}(z)}{p_{0}(z)}dz. 
	\end{align*}
	
	For the upper term in the KL divergence on the right hand side, we have
	\[ \int p_{0}(z)\log p_{0}(z)dz = \int p_{0}(z)\log p_{0}(z) (\int q_{\gamma}(x|z) dx)dz = \int\int p_{0}(z)q_{\gamma}(x|z)\log p_{0}(z) dxdz. \]
	$ \int q_{\gamma}(x|z) dx = 1 $ because $ z $ is constant in this integral, and integrating with respect to $ x $ the probability density function of $ x $ given $ z $ equals 1. \\
	
	Thus,
	\begin{align*}
		H(z,F_{\theta}(x)) &\leq \int\int p_{0}(z)q_{\gamma}(x|z) \log \frac{q_{\gamma}(x|z)}{p_{\theta}(\hat{z}|x)p(x)}dxdz \\
		&\phantom{space} + \int\int p_{0}(z)q_{\gamma}(x|z)\log p_{0}(z) dxdz - \int p_{0}(z)\log p_{0}(z) dz \\
		&= \int\int  p_{0}(z)q_{\gamma}(x|z) \log \frac{q_{\gamma}(x|z)p_{0}(z)}{p_{\theta}(\hat{z}|x)p(x)} dxdz- \int p_{0}(z)\log p_{0}(z) dz \\
		&= \KL[q_{\gamma}(x|z)p_{0}(z) \parallel p_{\theta}(\hat{z}|x)p(x)] - \mathbb{E}[\log p_{0}(z)]
	\end{align*}
	
\end{proof}

By Proposition \ref{KL uppper bound}, the objective function $ O_\text{entropy}(\gamma, \theta) $ has upper bound
\[ O(\gamma, \theta) = \KL [q_{\gamma}(x|z)p_{0}(z) \parallel p_{\theta}(\hat{z}|x)p(x)] - \mathbb{E}[\log p_{0}(z)] + \mathbb{E}[\lVert z - F_{\theta}(G_{\gamma}(z)) \rVert_{2}^{2}].\] 

$ O(\gamma, \theta) $ is the upper bound of the reconstructor loss function. To minimize the reconstructor loss function, we want to minimize its upper bound. $ O(\gamma, \theta) $ is minimized when the KL divergence and autoencoder loss both equal to 0, which means $ q_{\gamma}(x|z)p_{0}(z) = p_{\theta}(\hat{z}|x)p(x) $ and $ z = F_{\theta}(G_{\gamma}(z)). $ Theorem \ref{veeganmin} states the minimum of the reconstructor upper bound $ O(\gamma, \theta) $. It also states the distribution of reconstructor output $ p_{\theta}(\hat{z}) $ and generator output $ q_{\gamma}(x) $ when $ O(\gamma, \theta) $ is minimized. 
\\

\begin{thm}[This is Proposition 1 in \cite{veegan}]
	\label{veeganmin}
	If $ \gamma^{*} $ and $ \theta^{*} $ achieves $ \min (O(\gamma, \theta)), $ then $ O(\gamma^{*}, \theta^{*}) = H(p_{0}),$ with $ p_{0} $ as the target data distribution.\\
	Further, 
	$p_{\theta^{*}}(\hat{z}) = \int p_{\theta^{*}}(\hat{z}|x)p(x)dx = p_{0}(z),$ 
	and ${q_{\gamma^{*}}(x) = \int q_{\gamma^{*}}(x|z)p_{0}(z)dz = p(x)}.$
\end{thm}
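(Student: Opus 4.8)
The plan is to exploit the additive structure of $O(\gamma,\theta)$: it is the sum of a term that does not depend on the parameters, a KL divergence, and an expected squared norm. First I would note that the middle term is a constant, since
\[ -\mathbb{E}[\log p_{0}(z)] = -\int p_{0}(z)\log p_{0}(z)\,dz = H(p_{0}), \]
the (differential) entropy of the fixed noise distribution $p_{0}$. The remaining two terms are each non-negative: the KL divergence $\KL[q_{\gamma}(x|z)p_{0}(z) \parallel p_{\theta}(\hat{z}|x)p(x)]$ is non-negative by Gibbs' inequality, and $\mathbb{E}[\lVert z - F_{\theta}(G_{\gamma}(z)) \rVert_{2}^{2}]$ is the expectation of a non-negative quantity. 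Hence $O(\gamma,\theta)\geq H(p_{0})$ for every $(\gamma,\theta)$, so if $(\gamma^{*},\theta^{*})$ attains the minimum then $O(\gamma^{*},\theta^{*}) = H(p_{0})$ --- provided the bound is tight, which forces both non-negative terms to vanish at the optimum: $\KL[q_{\gamma^{*}}(x|z)p_{0}(z) \parallel p_{\theta^{*}}(\hat{z}|x)p(x)] = 0$ and $F_{\theta^{*}}(G_{\gamma^{*}}(z)) = z$ for $p_{0}$-almost every $z$.

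Next I would extract the two distributional identities from the vanishing of the KL term. A KL divergence between two densities is zero exactly when they agree almost everywhere, so, regarding both arguments as joint densities on the product of the noise space and the data space (identifying the reconstructor output $\hat{z}$ with the noise variable $z$), we obtain $q_{\gamma^{*}}(x|z)\,p_{0}(z) = p_{\theta^{*}}(\hat{z}|x)\,p(x)$ a.e. Integrating this identity over $x$ and using $\int q_{\gamma^{*}}(x|z)\,dx = 1$ gives $p_{0}(z) = \int p_{\theta^{*}}(\hat{z}|x)p(x)\,dx = p_{\theta^{*}}(\hat{z})$, the first ``Further'' claim; integrating instead over $z$ and using $\int p_{\theta^{*}}(z|x)\,dz = 1$ gives $q_{\gamma^{*}}(x) = \int q_{\gamma^{*}}(x|z)p_{0}(z)\,dz = p(x)$, the second. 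So once the KL term is known to be zero, both consequences are immediate marginalizations, and they need no separate use of the $l_{2}$ term.

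The step I expect to be the main obstacle is establishing that the lower bound $H(p_{0})$ is genuinely attained --- that there really is a pair $(\gamma^{*},\theta^{*})$ in the model class driving the KL term and the $l_{2}$ term to zero simultaneously --- since without this the minimum would strictly exceed $H(p_{0})$ and the stated value would be wrong. This is where one must assume the generator and reconstructor networks are expressive enough: concretely, one wants an (essentially) invertible $G_{\gamma^{*}}$ pushing $p_{0}$ forward to $p(x)$, with $F_{\theta^{*}}$ its inverse, because then the joint law of $(z,G_{\gamma^{*}}(z))$ with $z\sim p_{0}$ coincides with the joint law of $(F_{\theta^{*}}(x),x)$ with $x\sim p(x)$, killing the KL term, while $F_{\theta^{*}}\circ G_{\gamma^{*}} = \mathrm{id}$ kills the $l_{2}$ term. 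Existence of such a transport map holds for sufficiently regular $p_{0}$ and $p(x)$ of matching dimension, and in the idealized setting of the theorem it is taken for granted; so the real content of the proof is the decomposition, the non-negativity bound, and the marginalization argument, not a constructive existence result.
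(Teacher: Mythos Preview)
Your proposal is correct and follows essentially the same route as the paper's proof: use non-negativity of the KL term and the expected $l_{2}$ term to see that $O(\gamma,\theta)\geq -\mathbb{E}[\log p_{0}(z)]=H(p_{0})$, conclude that at the minimizer both non-negative terms vanish so that $q_{\gamma^{*}}(x|z)p_{0}(z)=p_{\theta^{*}}(\hat z|x)p(x)$, and then marginalize this identity over $x$ and over $z$ to obtain the two distributional equalities. The paper does not address the attainability issue you raise --- it simply asserts that one can choose $\gamma^{*},\theta^{*}$ making both terms zero --- so your final paragraph already goes beyond the level of rigor in the original argument.
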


\begin{proof}(This proof is from Appendix B in \cite{veegan})
	Since ${\KL [q_{\gamma}(x|z)p_{0}(z) \parallel p_{\theta}(\hat{z}|x)p(x)]} \geq 0 $ and $ {\mathbb{E}[\lVert z - F_{\theta}(G_{\gamma}(z)) \rVert_{2}^{2} \geq 0}, $ we can only minimize $ O(\gamma, \theta) $ by setting $ \gamma^{*} $ and  $ \theta^{*} $ such that ${q_{\gamma^{*}}(x|z)p_{0}(z) = p_{\theta^{*}}(\hat{z}|x)p(x)}  $ and $ {z = F_{\theta^{*}}(G_{\gamma^{*}}(z))}.$ Then, \[O(\gamma^{*}, \theta^{*}) =  - \mathbb{E}[\log p_{0}(z)] = - \int p_{0}(z)\log p_{0}(z) = H(p_{0}).\] 
	
	By the condition ${q_{\gamma^{*}}(x|z)p_{0}(z) = p_{\theta^{*}}(\hat{z}|x)p(x)} , $ if we integrate both sides by $ x ,$ we obtain 
	\[ p_{\theta^{*}}(\hat{z}) = \int p_{\theta^{*}}(\hat{z}|x) p(x) dx = \int q_{\gamma^{*}}(x|z)p_{0}(z)dx = p_{0}(z). \]
	If we integrate both sides by $ z, $ we obtain 
	\[q_{\gamma^{*}}(x) = \int q_{\gamma^{*}}(x|z)p_{0}(z)dz = \int p_{\theta^{*}}(\hat{z}|x) p(x)dz = p(x) \]	
\end{proof}

\section{Experiments}
In my experiments, I compare the severity of mode collapse in images of MNIST digits generated by PacGAN with packing degree of 2, VEEGAN, and DCGAN. In the literature, PacGAN \cite{pacgan} and VEEGAN \cite{veegan} explicitly stated that their architectures effectively reduce mode collapse, while DCGAN \cite{dcgan} is not designed to reduce mode collapse. I used MNIST digits as the training dataset for the GAN architectures. With 60,000 images of handwritten digits and 6,000 images for each digit, the frequency of appearance of each digit in the MNIST dataset is equal. The target probability distribution of digits is uniform. I conducted 10 trials on DCGAN, PacGAN, and VEEGAN respectively, 100 digits were generated by each GAN architecture in one trial. The digits generated by DCGAN and PacGAN are classified after 60 epochs of training. Because VEEGAN generates more blurred images compared to DCGAN and PacGAN, the digits generated by VEEGAN are classified after 400 epochs of training for better image quality. Figure \ref{digits} displays generated digit samples by DCGAN (left), PacGAN (middle), and VEEGAN (right). In my experiments, the DCGAN and PacGAN code are based on \cite{dcgan}, and I used  code from \cite{veegan} for the VEEGAN code. 

\begin{figure}[H]
	\centering
	\begin{subfigure}[b]{0.3\textwidth}
		\centering
		\includegraphics[width=0.9\linewidth]{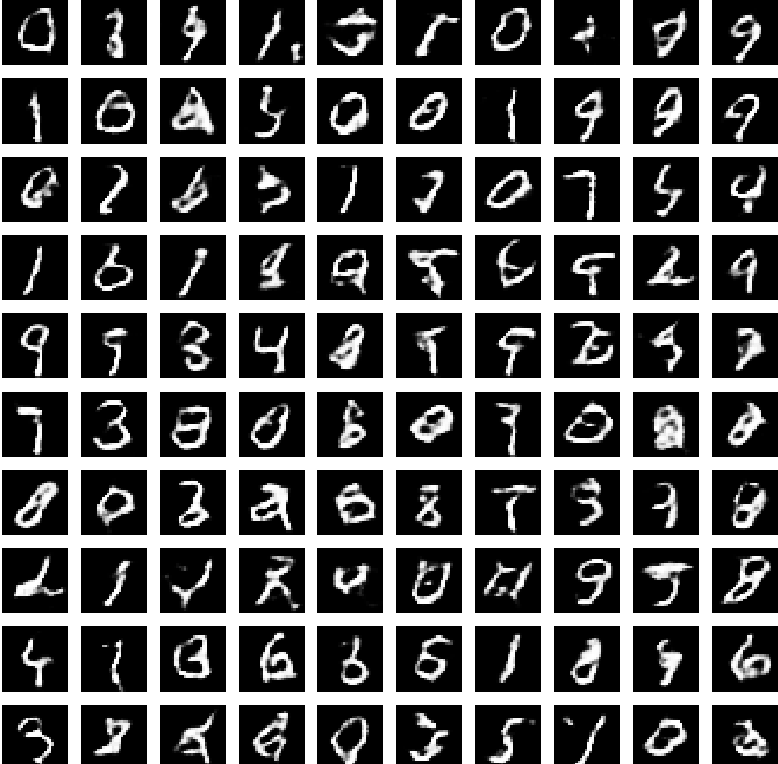}
	\end{subfigure}%
	\begin{subfigure}[b]{0.3\textwidth}
		\centering
		\includegraphics[width=0.9\linewidth]{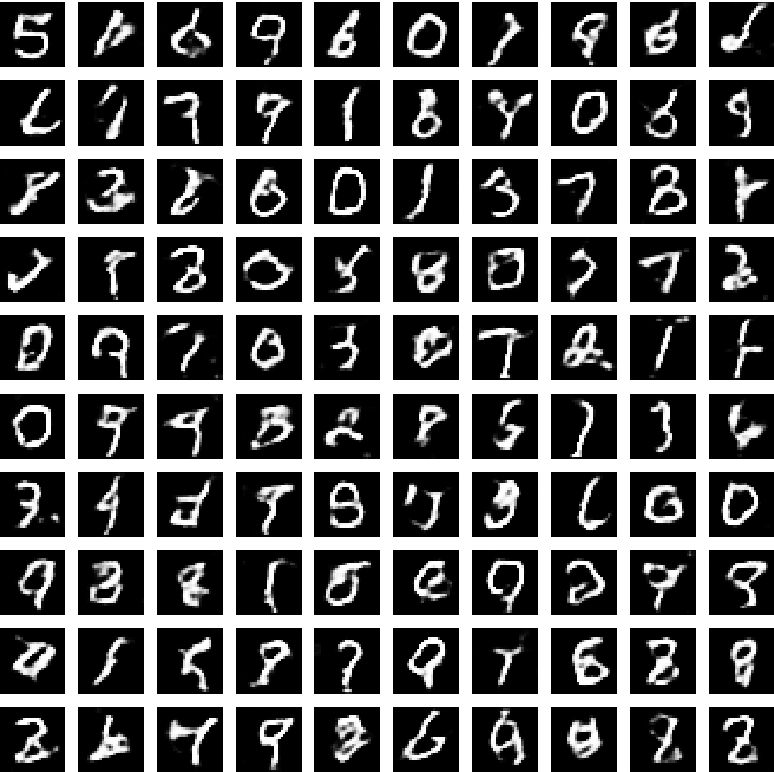}
	\end{subfigure}%
	\begin{subfigure}[b]{0.3\textwidth}
		\centering
		\includegraphics[width=0.9\linewidth]{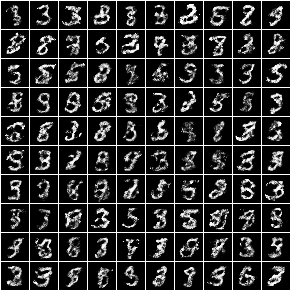}
	\end{subfigure}%
	\caption{The left panel displays 100 digit samples generated by DCGAN; the middle panel displays 100 digit samples generated by PacGAN; the right panel displays 100 digit samples generated by VEEGAN. }
	\label{digits}
\end{figure}

There are multiple explanations to why VEEGAN generates blurrier images than DCGAN and PacGAN. One explanation is the objective function in VEEGAN has a term $ \mathbb{E}[\lVert \hat{z} - F_{\theta}(G_{\gamma}(z)) \rVert_{2}^{2}] $. The term is the average l2-loss of reconstructor error. This term encourages VEEGAN to generate images that minimizes \textit{average} reconstructor error. Under a system that minimizes average error, the VEEGAN generator is encouraged to generate images whose pixel values are the average of all pixel values in the dataset, instead of generating sharper individual images. Another explanation is that the VEEGAN architecture heavily penalizes images with features in the wrong place, but is more lenient toward blurry features. In contrast, DCGAN and PacGAN heavily penalize blurry images because they look inauthentic compared to the real images with sharp features. An alternate explanation is that VEEGAN includes an autoencoder that compresses data into a latent space. Because the latent space is in a smaller dimension than the data space, it is hard to pass all the information in the image to the latent space. Meanwhile VEEGAN tries to minimizes the output loss. As a result, the images generated by VEEGAN capture fewer details than real images, and the generated image looks like each pixel is an average of the pixels of several real images. \\

From the distributions of images generated by DCGAN, PacGAN, and VEEGAN, I find that DCGAN and PacGAN generate an excessive number of 0, 1, and 7 compared to the MNIST training dataset. DCGAN displays severe mode collapse on digits 2 and 4. In some trials, there were only one or two images of 2 among the 100 images generated by DCGAN. PacGAN also displays mode collapse on digit 2 and 4, but to a slightly lesser extent. From my experiment results, DCGAN and PacGAN have a proclivity to generate digits with simple features and avoid generating digits with sophisticated features. In contrast, VEEGAN generates an excessive number of 3, 5, and 8, but it displays severe mode collapse on number 1 and 6. In one of the trials, number 6 was not present at all among the 100 images. In two other trials, number 1 was not present at all. VEEGAN favors generating digits with curves and sophisticated features, and avoids generating digits with straight lines and simple features. Figure \ref{digitfreq} shows the frequency of digits generated in one trial by DCGAN, PacGAN, and VEEGAN. I choose not to display the average frequency of digits because the digit distribution in each trial is different. Taking the average of 10 distributions will reduce the effect of mode collapse illustrated in individual trials.\\

\begin{figure}[h]
	\centering
	\includegraphics{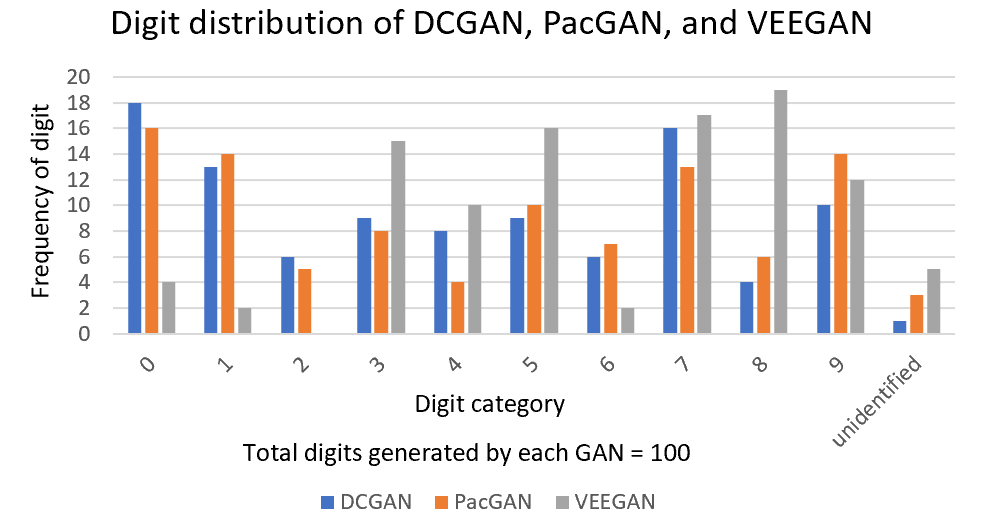}
	\caption{This chart displays the frequency of each digit generated by DCGAN, PacGAN, and VEEGAN, from left to right. DCGAN and PacGAN produce similar digit distributions, while VEEGAN produces a substantially different digit distribution from the former two.}
	\label{digitfreq}
\end{figure}

I use Kullback-Leibler divergence (KL divergence) as the metric to measure the severity of mode collapse between the distribution of generated digits and MNIST digits. KL divergence is a measure of how one probability distribution is different from the reference distribution \cite{klbook}\cite{kldivergence}. For discrete probability distributions $ P $ and $ Q $ defined on the same probability space $ \mathcal{X} $, consider $ P $ as the reference distribution. Then for random variable $ X\in\mathcal{X} $, the KL divergence between $ Q $ and $ P $ is defined as $ \KL(Q||P) = \displaystyle\sum_{X\in\mathcal{X}} Q(X) \log \frac{Q(X)}{P(X)} $. When $ Q(X) = 0 $, the contribution of the corresponding term is 0, because $ \displaystyle\lim_{x\to 0^{+}} x\log x = 0 $. But when for some $ X $, $ P(X)=0 $ and $ Q(X) \neq 0 $, $ \KL(Q||P) = \infty $. However, in experiments, it is very likely that some digits are not generated, and the digit category will have probability of 0. To make the KL divergence computable when the probability of a digit category is 0, I approximate the probability of the digit as $ 10^{-10} $ instead of 0. In every trial of my experiment, I computed the KL divergence between generated digit distribution from each GAN architecture and digit distribution from MNIST dataset. The digit distribution from MNIST dataset is the reference distribution. The KL divergence between generated distribution by each GAN and  the MNIST distribution is averaged over 10 trials. A low KL divergence indicates the generated digit distribution resembles the distribution of MNIST dataset and has minimal mode collapse. A high KL divergence implies the generated digit distribution deviates from the distribution of MNIST dataset and suffers severe mode collapse. The average KL divergence over 10 trials between generated digit distribution by different GANs and MNIST dataset is displayed in Table \ref{kltable}.

\begin{table}[H]
	\centering
	\begin{tabular}{|c|c|}
		\hline
		& Average KL divergence \\
		\hline
		DCGAN & 0.581\\
		\hline
		PacGAN & 0.522\\
		\hline
		VEEGAN & 0.876\\
		\hline
	\end{tabular}
	\caption{This table displays the average KL divergence between the generated digit distributions from three different GAN architectures and the MNIST digit distribution. Each GAN architecture generated 100 images of digits each trial for 10 trials. In each trial, the KL divergence between generated and MNIST digit distribution is computed. The average KL divergence over 10 trials is displayed in this table.}
	\label{kltable}
\end{table}

Contrary to the experiment results presented by \cite{pacgan} and \cite{veegan}, my experiments show that PacGAN and DCGAN display mode collapse to a similar extent. With only 10 trials, I am unable to determine if a 0.059 difference in KL divergence is statistically significant. VEEGAN demonstrates more severe mode collapse than the former two. My experiment result is incongruent with the experiment results in \cite{pacgan} and \cite{veegan}.

\section{Conclusion}
Both PacGAN and VEEGAN intend to reduce mode collapse between generated and target distributions. However, the approaches they take to prove their theoretical results are very different in spirit. The theoretical result of PacGAN involves a geometric proof drawing theorems from information theory, linear algebra, and topology; while the theoretical result of VEEGAN is more algebraic and comprises of approximation and minimization of an objective function. In my experiments, the result does not show that PacGAN demonstrates significantly less mode collapse than DCGAN, contrary to the experiment results of the PacGAN authors. VEEGAN demonstrates more severe mode collapse and poorer image quality than both DCGAN and PacGAN. 

\section{Acknowledgment}
I would like to thank Professor Peter Ramadge for his guidance and keen insight on my work. He points me to possible new directions when I am lost. He never hesitates to help me with my questions, no matter they pertain to mathematics, programming, or even formatting. He has taught me that no problem is too small. I would like to also thank Jonathan Spencer, Vineet Bansal, Hossein Valavi and Yinqi Tang for taking their personal time to help me with debugging my programs. Your help facilitated my progress in running my experiments that would otherwise not finish in time. It means a lot to me.

\newpage
\bibliography{biblio}
\bibliographystyle{plain}

\end{document}